\newcommand*{\fullref}[1]{\hyperref[{#1}]{\autoref*{#1} \nameref*{#1}}}
\newtheorem{theorem}{Theorem}
\newtheorem{lemma}{Lemma}
\newtheorem{corollary}{Corollary}
\newtheorem*{proof}{Proof}
\newtheorem{assumption}{Assumption}
\newcommand{\tmix}{t_{\operatorname{mix}}}
\def\*#1{\boldsymbol{#1}}
\newcommand{\myparagraph}[1]{\textbf{#1}\hspace{0.5em}}
\title{MixML: A Unified Analysis of Weakly Consistent\\ Parallel Learning}
\author{Yucheng Lu\thanks{Corresponds to: yl2967@cornell.edu} }
\author{Jack Nash\thanks{Corresponds to: jrn79@cornell.edu} }
\author{Christopher De Sa\thanks{Corresponds to: cdesa@cs.cornell.edu} }
\affil{Department of Computer Science, Cornell\ University}
\date{}
\begin{document}

\maketitle

\begin{abstract}
Parallelism is a ubiquitous method for accelerating machine learning algorithms.
However, theoretical analysis of parallel learning is usually done in an algorithm- and protocol-specific setting, giving little insight about how changes in the structure of communication could affect convergence.
In this paper we propose MixML, a general framework for analyzing convergence of weakly consistent parallel machine learning. 
Our framework includes: (1) a unified way of modeling the communication process among parallel workers; (2) a new parameter, the mixing time $t_{\text{mix}}$, that quantifies how the communication process affects convergence; and (3) a principled way of converting a convergence proof for a sequential algorithm into one for a parallel version that depends only on $t_{\text{mix}}$. 
We show MixML recovers and improves on known convergence bounds for asynchronous and/or decentralized versions of many algorithms, including SGD and AMSGrad.
Our experiments substantiate the theory and show the dependency of convergence on the underlying mixing time.
\end{abstract}
\section{Introduction}\label{introduction section}
Learning algorithms have shown promising results in training machine learning models with their sequential versions \cite{sutskever2013importance,ruder2016overview}.
Given their success, practitioners have developed methods to speed up their computation. One general method is to run them in a parallel or distributed setting, where multiple processors or workers locally evaluate optimization steps on models while periodically performing communication to exchange intermediate results such as gradients \cite{li2014communication,recht2011hogwild} or current model parameters \cite{lian2017can}.
The communication protocols among workers substantially affect the convergence rate on the learning \cite{alistarh2020elastic}. The most basic protocol is centralized synchronous communication \cite{chen2016revisiting}, which maintains perfect consistency among workers, i.e., workers always reach agreement on the parameters before stepping into the next iteration. Despite the consistency guarantee, the basic protocol usually incurs synchronization or centralization overhead when the system scales up. 
To address this, several lines of research introduce various protocols which maintain weaker consistency but allows algorithms converge at the same asymptotic rate as with perfect consistency. 
These protocols include: allowing workers to communicate asynchronously \cite{lian2015asynchronous,zhang2015staleness}; pipelining the communication signals \cite{li2018pipe}; adopting a compression method in the communication \cite{alistarh2018convergence}; and decentralizing the workers such that each worker only communicate with a subset of the others \cite{lian2017can,tang2018d} (details appear in Section~\ref{related work section}). 

However, in theory these algorithms are usually analyzed with protocol-specific assumptions. For instance, with asynchronous communication \cite{recht2011hogwild,alistarh2018convergence,de2015taming} the convergence is provably shown to depend on the maximum delay of two workers querying the shared model. 
By comparison, when workers communicate in
a decentralized fashion \cite{lian2017can,tang2018d,lian2017asynchronous}, the convergence rate depends on the spectral gap of the underlying graph \cite{lu2020moniqua}. This brings up several problems:
\begin{enumerate}
    \item The assumptions on which the convergence depend are highly correlated with the implementation details of the system. Consider a hybrid system where part of the system is communicating with shared memory and the others are communicating via message passing: extracting assumptions for such a system is complicated.
    \item It is hard to compare the convergence rate among protocols with different assumptions since they are usually measured in terms of different aspects of the system (e.g., the aforementioned maximum delay and spectral gap), and so the theory gives limited insights when comparing different protocols. 
    \item The analysis can be redundant. Even with an identical optimizer, a new communication protocol requires a new proof of convergence, expending valuable researcher time.
\end{enumerate}
At this point, a natural question is: \textit{Is there a generic way to analyze the underlying communication protocol such that convergence bounds can be obtained regardless of the system details?}

In this paper we answer this question in the affirmative by proposing MixML, a general framework that provides convergence rate guarantees for parallel algorithms independent of system details such as hardware hierarchy or configuration of networks. 
We do this by introducing and extending the concept of \emph{mixing time} from Markov Chain theory \cite{levin2017markov}. 
The motivation for using the mixing time originates from the insight that communication among multiple workers is a way of letting them reach consensus on certain values or parameters, and we can measure the effectiveness of a protocol by how rapidly consensus is (approximately) reached. For example, this consensus can be obtained via many atomic operations such as read-and-write \cite{de2017understanding}, averaging \cite{lian2017can}, etc. In other words, with sufficient communication or these atomic operations, workers will reach stationarity (consensus) after which more communication will not change their states (or at least will not change them very much).
The \emph{mixing time} for a given protocol essentially measures the time required by protocol to get ``close'' to such consensus state, in exactly the same way that the mixing time of a Markov chain measures how long it takes for the chain to get ``close'' to its stationary distribution. 

Our contributions in this paper can be summarized as follows.
\begin{itemize}[nosep]
    \item We propose MixML, a general framework that abstracts the system details of parallel learning and separates computation and communication. (Section~\ref{modeling section})
    \item We show how to derive convergence rates for parallel algorithms in the MixML framework. These rates depend on the communication protocol only in terms of its mixing time parameter $\tmix{}$. (Section~\ref{principled proof section})
    \item We apply our theory on a variety of algorithms, including SGD, AMSGrad and RMSProp and reveal how communication affects their convergence bound in general. We also show that our results match and sometimes improve several previous analysis. (Section~\ref{application section})
    \item We conduct experiments to empirically show the dependency of mixing time on convergence when training Resnet20 on CIFAR10. (Section~\ref{Experiments section})
\end{itemize}

\section{Related Work}\label{related work section}
\myparagraph{Parallel Learning with Weak Consistency.}
As introduced in Section~\ref{introduction section}, a variety of protocols work by maintaining weak consistency compare to the basic centralized synchronous protocol. For instance,
(1) Lock-free read-and-write \cite{recht2011hogwild,noel2014dogwild,lian2015asynchronous,pan2016cyclades}. This is usually adopted in a shared memory architecture, where multiple threads can read and write main memory without software locks. 
(2) Asynchronous communication \cite{agarwal2011distributed,lian2015asynchronous,li2014communication,de2015taming}. With asynchrony, workers update a shared or centralized model without synchronization. This is robust to stragglers (slow workers) in the system and can speed up training with respect to wall clock time. 
(3) Decentralized communication \cite{he2018cola,zhang2019asynchronous,nazari2019dadam,assran2018stochastic}. Decentralization removes centralized storage (i.e. Parameter Server \cite{li2014scaling} or shared memory \cite{recht2011hogwild}) and let parallel workers communicate in a peer-to-peer fashion. This achieves load-balancing and presents a easier way to scale up. 
(4) Sparse communication \cite{wang2018atomo,yu2019distributed}. In sparse communication, the information exchanged among workers is sparsified. For example, \cite{alistarh2018convergence} studies the case where only top $K$ largest coordinates of a gradient is sent in each communication step.
In other related work, \citet{yu2019distributed} discusses the convergence of momentum method in the decentralized setting and \citet{nazari2019dadam} proposes a modified version of Adam \cite{kingma2014adam} in the decentralized setting where projection is considered.

\myparagraph{Analysis on Weak Consistency.}
Unified analysis on optimization with weak consistency are often focused on Stochastic Gradient Descent (SGD) and its variants.
Specifically, \citet{lian2015asynchronous} provides the proofs for SGD with asynchronous communication where shared memory and message-passing setting are both considered.
\citet{de2015taming,alistarh2018convergence} adopt martingale-based analysis to prove the convergence of SGD with asynchronous communication in both convex and non-convex cases, which are more general than previous work \cite{ho2013more}.
\citet{wang2018cooperative} proposes a general model called Cooperative SGD that cover averaging and decentralization in parallel SGD.
\citet{alistarh2020elastic} proposes a elastic consistency model that thoroughly discusses the case of SGD with weak consistency.
\citet{qiao2018fault} proposes a concept of "rework cost" and analyze the staleness of asynchronous communication as perturbations of a series of SGD iterations. 
In other works, \citet{liu2018towards,hannah2019fundamental} studies the trade-off between momentum and asynchrony via specific streaming PCA problem. \citet{acharya2019distributed} theoretically discusses the sublinear communication in distributed training. \citet{karimireddy2019error} proposes a general framework $\delta$-compressor that specifically analyzes the weak consistency caused by quantization or sparsification scheme in the communication.

%\myparagraph{How MixML generalizes previous works.}
%MixML generally improves upon prior works in that: (1) It provides a specific modeling on the parallel learning instead of abtracting it as a mathematical condition. (2) It allows us to use a single parameter to analyze arbitrary communication protocols. (3) It provably reveals the general effects of communication (or mixing time) on the convergence bound of a learning algorithm beyond SGD.
\section{Modeling Parallel Learning}\label{modeling section}

\begin{figure}[t]
    \centering
    \includegraphics[width=0.6\textwidth]{./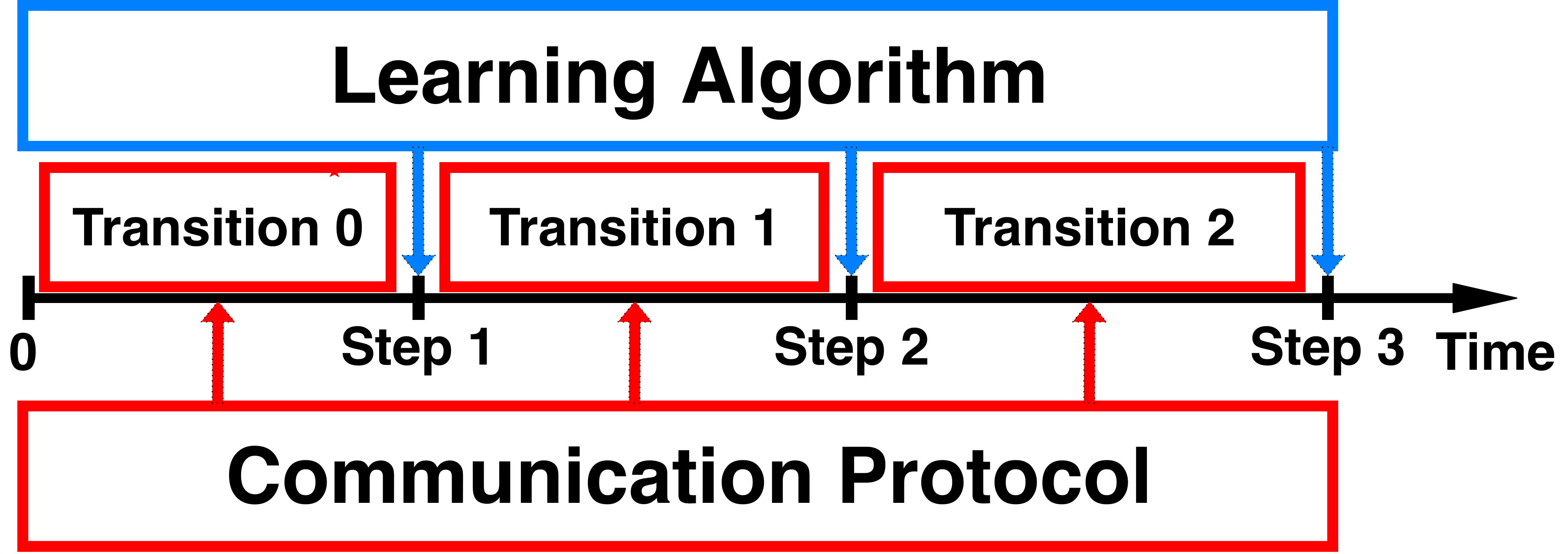}
    \caption{An illustration of how the computation and communication events conduct optimization steps and transitions over time.}
    \label{diagram}
\end{figure}

Consider multiple workers running a parallel learning algorithm, where workers can refer broadly to machines, hardware accelerators, threads, etc. Each worker has access to two types of storage resources: (1) internal storage (e.g.~local memory) which can only be accessed and modified by the worker itself and (2) external storage, which can potentially be read by or modified due to actions of multiple workers (e.g.~shared memory, caches backed by shared memory, communication buffers) \cite{de2017understanding}.
This storage is updated by two types of events: \emph{computation events}, in which a worker reads from and accumulates into one or more storage locations in order to implement the learning algorithm, and \emph{communication events}, in which either the workers actively or the network/cache hierarchy/interconnect passively alter data in the external storage to maintain consistency among the workers.
For example, in  multicore parallel SGD, a gradient computation by a CPU core would be a computation event, while an automatic cache update to reflect changes in RAM would be a communication event \cite{de2017understanding}.
Or, in a message-passing environment, messages would be communication events \cite{lian2015asynchronous},

In MixML, we make the following simplifying assumptions.
First, we suppose that the basic communication/computation events can be modeled as atomic. This is a fairly mild assumption, because the thing we are assuming are atomic is only a modification of a particular storage location such as a local cache: we do not require that information will propagate to all the workers atomically. 
Second, we suppose that the events follow a \emph{causal ordering}. Since no computation can depend on each other cyclically, we can then put their ordering into a \emph{total ordering} in which we assign each event a distinct time $t$ \cite{mullender1993distributed,schneider1993replication}.
Without loss of generality, we assign computation events to integer times $t$ starting at $t = 0$, and let communication events occur at non-integer times: this is illustrated in Figure~\ref{diagram}.
Note that in a real system, these events would be happening in parallel: we are requiring here that the system behave equivalently to the events happening sequentially in some order, not that the events actually are occurring sequentially in real time.
Third, since MixML models a learning algorithm, we suppose that the external storage state can be modeled as a vector of real numbers; we neglect any error due to using floating-point numbers rather than exact arithmetic in the algorithm.
We also suppose that each communication event acts \emph{linearly} on this state: this holds for essentially all communication events that occur in practice, including copying, accumulating, and averaging.

Proceeding from these assumptions, we model the system formally as follows.
Let $\*X_t$ denote the state of the external storage just before the event at time $t$. 
At time $t$, a worker then computes an update, which we denote $\*\Delta_t(\*X_t, \ldots, \*X_0)$ and accumulates it, multiplied by a step size $\alpha_t$, into the state.
(Note that $\*\Delta_t$ is a function of $\*X_s$ for $s \le t$ because the worker could have saved information from past computation actions into its internal storage, and so this past state could affect the present update.)
Then, before time $t+1$, some number of communication events might occur: each is a linear operator acting on $\*X$, and we let $\*M_t$ denote their product.
Thus, the dynamics of the state can be explicitly expressed as:
{\setlength{\abovedisplayskip}{3pt}\setlength{\belowdisplayskip}{3pt}\begin{equation}\label{Equation prototype}
    \*X_{t+1}\leftarrow \underbrace{\*M_t}_{\mathclap{\text{communication}}}(\*X_t+\alpha_t \underbrace{\*\Delta_t(\*X_t, \cdots, \*X_0)}_{\text{computation}}).
\end{equation}}%
Note that (\ref{Equation prototype}) is a general form that describes most parallel learning algorithms \cite{recht2011hogwild,lian2015asynchronous,pan2016cyclades,lian2017can}.
As the sequence of $\*M_t$ and $\*\Delta_t$ denote the progress or effects of communication and computation, respectively, in the rest of the section, we separate the two processes and investigate them individually.

\subsection{Characterizing Communication.}
In parallel training, the purpose of communication is to let workers approach or reach consensus about the model they are learning. That is to say, after sufficient communication actions, all the workers should have the same (or at least similar) ``view'' of the state of the algorithm.\footnote{From a systems perspective, this implies the \emph{liveness} of the protocol. This is a widely used assumption in traditional parallel computing research \cite{frommer1998asynchronous,el2005asynchronous}.}
The number of transitions required by the protocol to reach this consensus measures the effectiveness of the protocol, and we call this quantity the \emph{mixing time}.
The motivation and formulation of mixing time is inspired by the theory of Markov Chains. Specifically, the mixing time of a Markov Chain refers to the time or number of transitions required to get close to its stationary distribution~\cite{levin2017markov}. 

Now, we define our notion of mixing time, and introduce some assumptions we need to make for this notion to make sense.
Our assumptions are based on the following intuition about a general communication protocol:
\begin{enumerate}[nosep]
    \item The communication process should cause the system to approach a consensus state after a sufficiently long (but bounded) amount of time.
    \item The communication protocol should not amplify or enlarge any signal or value in the system. % Specifically, it will not enlarge any value in the storage.
\end{enumerate}
Based on this intuition, we present the following assumptions about the communication process matrices $\*M_t$.
{\setlength{\abovedisplayskip}{2pt}\setlength{\belowdisplayskip}{2pt}
\begin{assumption}[Consensus is preserved by communication]\label{assume consensus}
    There exists a matrix $\*M_\infty$, such that for any time $t\geq 0$,
    \[ \*M_\infty \*M_t=\*M_t\*M_\infty=\*M_\infty=\*M_\infty^2. \]
\end{assumption}
\begin{assumption}[Mixing time]\label{assume mixing time}
There exists a constant $\tmix{}$, such that for any state vector $\*X$ and for any $t\geq 0$, the following bound holds: \[\textstyle\left\|\left( \prod_{k=t}^{t+\tmix{}-1}\*M_k \right) \*X-\*M_\infty \*X\right\| \leq \frac{1}{2}\left\|\*X-\*M_\infty \*X\right\|.\]
\end{assumption}
These assumptions abstract our first piece of intuition. Assumption~\ref{assume consensus} shows starting from state $\*X$ both that a consensus state $\*M_{\infty} \*X$ exists, that additional communication actions will not alter this state, since $\*M_t \*M_{\infty} \*X = \*M_{\infty} \*M_t \*X = \*M_{\infty} \*X$.
Assumption~\ref{assume mixing time} ensures that communication events will cause the state to approach this consensus.
\begin{assumption}\label{assume scale bound}
There exists a constant $\xi$ such that for any $t \geq s \geq 0$, $\left\|\prod_{k=s}^{t} \*M_k\right\| \leq \xi$.
\end{assumption}
This assumption abstracts our second piece of intuition, by ensuring that communication never increases the magnitude of a value it communicates.
Given Assumptions~\ref{assume consensus} to \ref{assume scale bound}, we can bound the rate of reaching consensus as follows.
\begin{lemma}\label{lemmamixingtime}
For any $t \geq s \geq 0$, and any state vector $\*X$,
\begin{equation}
    \textstyle
    \left\| \left(\prod_{k=s}^{t-1} \*M_k \right) \*X-\*M_\infty \*X\right\| \leq 2^{-\left\lfloor\frac{t-s}{\tmix{}}\right\rfloor}(1+\xi)\xi\left\|\*X\right\|.
\end{equation}
\end{lemma}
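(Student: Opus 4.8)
The plan is to reduce the general product over $[s,t-1]$ to a composition of complete ``mixing blocks'' of length $\tmix{}$ together with a short leftover, and then to track how the deviation-from-consensus shrinks as each block is applied. Throughout I write $D(\*Z) := \*Z-\*M_\infty\*Z$ for the ``distance to consensus'' of a vector. The first step is purely algebraic: from Assumption~\ref{assume consensus}, $\*M_\infty\*M_k=\*M_k\*M_\infty=\*M_\infty$ for every $k$, so any product $\*Q=\prod_{k=a}^{b}\*M_k$ of consecutive communication matrices satisfies $\*M_\infty\*Q=\*Q\*M_\infty=\*M_\infty$ (peel off one factor at a time). The crucial consequence is that the consensus target is invariant under further communication, $\*M_\infty(\*Q\*X)=\*M_\infty\*X$, which lets me restate Assumption~\ref{assume mixing time} as a genuine contraction on $D$: for a single block $\*B_\tau:=\prod_{k=\tau}^{\tau+\tmix{}-1}\*M_k$ we have $\*B_\tau\*X-\*M_\infty\*X=\*B_\tau\*X-\*M_\infty(\*B_\tau\*X)=D(\*B_\tau\*X)$, whence $\|D(\*B_\tau\*X)\|\le\tfrac12\|D(\*X)\|$.

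Before the main estimate I would establish the auxiliary bound $\|\*M_\infty\|\le\xi$, since nothing in the assumptions controls $\*M_\infty$ directly. Iterating the block contraction shows that the product of $m$ consecutive blocks contracts $D$ to at most $2^{-m}\|D(\*X)\|$, so $\big(\prod_{k=s}^{s+m\tmix{}-1}\*M_k\big)\*X\to\*M_\infty\*X$ as $m\to\infty$ for every fixed $\*X$. Each of these finite products has operator norm at most $\xi$ by Assumption~\ref{assume scale bound}, so passing to the limit gives $\|\*M_\infty\*X\|\le\xi\|\*X\|$, i.e.\ $\|\*M_\infty\|\le\xi$. (For the empty-remainder case below I also use $\xi\ge1$, which holds since $\|\*M_\infty\|=\|\*M_\infty^2\|\le\|\*M_\infty\|^2$ forces $\|\*M_\infty\|\ge1$ whenever $\*M_\infty\ne0$, and may otherwise be assumed without loss of generality by enlarging $\xi$.)

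For the main estimate I would write $t-s=q\tmix{}+r$ with $q=\lfloor(t-s)/\tmix{}\rfloor$ and $0\le r<\tmix{}$, and factor the product as $\prod_{k=s}^{t-1}\*M_k=\*R\,\*B^{(q)}\cdots\*B^{(1)}$, where $\*B^{(j)}:=\prod_{k=s+(j-1)\tmix{}}^{s+j\tmix{}-1}\*M_k$ are the $q$ full blocks and $\*R:=\prod_{k=s+q\tmix{}}^{t-1}\*M_k$ is the $r$-matrix remainder applied last. Applying the block contraction $q$ times (using at each stage that $\*M_\infty$ absorbs all preceding blocks, so the consensus target never drifts) yields $\|D(\*W)\|\le2^{-q}\|D(\*X)\|$ for $\*W:=\*B^{(q)}\cdots\*B^{(1)}\*X$. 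Using the two absorption identities $\*M_\infty\*W=\*M_\infty\*X$ and $\*R\,\*M_\infty=\*M_\infty$, I can rewrite $\prod_{k=s}^{t-1}\*M_k\*X-\*M_\infty\*X=\*R\,\*W-\*M_\infty\*W=\*R\,D(\*W)$, so that applying $\*R$ costs only the norm factor $\|\*R\|\le\xi$ from Assumption~\ref{assume scale bound} (with $\|\*R\|=1\le\xi$ when $r=0$). Combining everything with $\|D(\*X)\|=\|\*X-\*M_\infty\*X\|\le(1+\|\*M_\infty\|)\|\*X\|\le(1+\xi)\|\*X\|$ gives exactly $2^{-q}(1+\xi)\xi\|\*X\|$.

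I expect the main obstacle to be the auxiliary bound $\|\*M_\infty\|\le\xi$: the assumptions bound products of the $\*M_k$ but say nothing about $\*M_\infty$ as an operator, and the key realization is that $\*M_\infty$ is the strong (pointwise) limit of the block products, so that the uniform norm bound of Assumption~\ref{assume scale bound} survives in the limit. The remaining delicacy is bookkeeping: one must verify that the per-block contraction actually \emph{composes}, which relies entirely on the absorption identity $\*M_\infty\*B^{(j)}=\*M_\infty$ guaranteeing that the consensus target is the same at every stage, and one must place the remainder $\*R$ on the outside, where only the norm bound $\xi$ (and not a contraction) is available.
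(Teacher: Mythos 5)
Your proposal is correct and follows essentially the same route as the paper's proof: factor off the length-$r$ remainder on the outside (costing a factor $\xi$ via Assumption~\ref{assume scale bound}), contract through the $\lfloor(t-s)/\tmix{}\rfloor$ full blocks using the absorption identity $\*M_\infty\*M_k=\*M_\infty$ so the consensus target never drifts, and finish with $\|\*X-\*M_\infty\*X\|\le(1+\xi)\|\*X\|$. Your explicit justification that $\|\*M_\infty\|\le\xi$ (as a strong limit of block products) is a detail the paper's proof glosses over by citing Assumption~\ref{assume scale bound} directly, but it does not change the argument.
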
}
Lemma~\ref{lemmamixingtime} shows that, without any computation events, the system approaches consensus at a rate that depends only on its mixing time.
%Later, we will show that we can prove the full system converges at a rate that only depends on the properties of the communication system through its dependence on the mixing time: i.e. \emph{the mixing time is a single parameter that bounds the effect of the communication process on the convergence rate of the algorithm}.
%Note that if $\tmix{}$ approaches infinity, i.e., the protocol requires infinite time for the system to reach consensus, then the distance of the current state to the consensus state will not be decreasing over time.

\myparagraph{Communication case studies.}
We proceed to introduce how mixing time can be applied to analyze the communication protocols of several representative algorithms and how mixing time relates to assumptions from previous work.

\textit{I. AllReduce} \cite{chen2016revisiting}.
AllReduce is an operation from MPI \cite{gropp1999using} that lets workers reach a consensus state immediately. Specifically, all the workers in AllReduce broadcast their local values to all the others while gathering remote values from all the other workers. In our formulation, it is straightforward to show that mixing time for AllReduce is\footnote{Note that in our setting, the optimization timestep is defined as one single worker's update anywhere in the system. In a synchronous centralized algorithm, one iteration implies $n$ such steps, one for each worker. And correspondingly, the mixing time is $n$.} $n$, where $n$ is the number of workers.

\textit{II. Gossip-based Averaging} \cite{boyd2004fastest}. 
Gossip-Based Averaging is a decentralized protocol that let all the parallel workers connect to form a graph. And each worker communicates by averaging its parameters with adjacent neighbors according to a doubly-stochastic matrix $W$.
\citet{boyd2004fastest} shows that this $W$ can be seen as a ``transition'' of the system and the rate at which it converges depends on the Markov Chain $W$ defines. Note that this protocol can be seen as a special case of our formulation as it has special transition matrix (Markov Chain).
Applications of using this protocol include D-PSGD \cite{lian2017can}, which parallelizes SGD with gossip-based averaging.

\textit{III. Lock-free Method} \cite{recht2011hogwild}. With the lock-free method, parallel workers communicate via a shared memory but without software locks. An application is Hogwild! \cite{recht2011hogwild}, where workers communicate via read operations to shared memory. Hogwild! assumes that the maximum delay between when a gradient update is written by a worker and when it is available in all workers' local caches is bounded by some constant $\tau$. This means that, after at most $\tau$ time, all the workers will reach perfect consistency (making $\tau$ an upper bound on our mixing time $\tmix{}$). In comparison, our mixing time formulation is less restricted as it only requires the time of distance to consensus being halved is bounded.

\iffalse
\begin{table*}[h]
\caption{The corresponding mixing time in different communication algorithms with associated assumptions ($n$ denotes the number of parallel workers).}\label{tmixcomparetable}
\begin{center}
\begin{tabular}{l|c|c}
\toprule
Communication Algorithm & Assumption & $\tmix{}$\\
\midrule
Synchronous Centralized \cite{dekel2012optimal}   & N/A        & 1\\
\midrule
Low-frequent \cite{lin2018don}   & Communication frequency $\nu$ & $1/\nu$\\
\midrule
Asynchronous Consistent \cite{agarwal2011distributed} & Maximum Delay $\tau$ & $\tau$\\
\midrule
Asynchronous Lock-free \cite{recht2011hogwild} & Maximum Delay $\tau$  & $\tau$\\
\midrule
Decentralized \cite{lian2017can} & Spectral Gap $\rho$  & $\frac{\log(n)}{\rho}$\\
\bottomrule
\end{tabular}
\end{center}
\end{table*}
\fi
\subsection{Characterizing Computation}\label{MixML section}
To characterize computation, we want to formalize the intuition that the parallel algorithm we are modeling should be in some sense a ``parallel version'' of a known sequential stochastic learning algorithm $\mathcal{A}$.
Such sequential learning algorithms have the general form
\begin{equation}\label{generalrulesequential}
    \*x_{t+1} = \*x_t - \alpha_t \*{\tilde \delta}_t(\*x_t, \cdots, \*x_0).
\end{equation}
where $\*x_t \in \mathbb{R}^d$ is the state at timestep $t$, $\alpha_t$ denotes the step size, and
$\*{\tilde \delta}_t(\*x_t, \cdots, \*x_0)$ denotes some randomized update to variable $\*x_t$ (usually a gradient step) that could depend on previous states.
Most sequential learning algorithms used in practice are described by (\ref{generalrulesequential}), including SGD, Momentum SGD, and Adam \cite{ghadimi2013stochastic,johnson2013accelerating,kingma2014adam,reddi2019convergence}.

What would it mean for a learning algorithm described in our MixML model to be a ``parallel version'' of Algorithm $\mathcal{A}$?
We start by considering a special case, where we suppose that workers always reach consensus immediately: that is, $\*M_t = \*M_\infty$. 
We call this \emph{perfect communication}. Under perfect communication, we say that our algorithm is a ``parallel version'' of $\mathcal{A}$ if the execution of the parallel algorithm follows the same dynamics as its sequential execution.
And the perfect-communication parallel algorithm has update step from (\ref{Equation prototype})
\begin{equation}
    \*X_{t+1} = \*M_\infty (\*X_t+\alpha_t \*\Delta_t(\*X_t, \cdots, \*X_0)),
\end{equation}
so how can we meaningfully say this is equivalent to (\ref{generalrulesequential})?
We do this by observing that the worker that computes at timestep $t$ does not usually read the whole state $\*X_t$, but rather some \emph{local view} of the external storage it has access to, which we denote $\*\Pi_t \*X_t$, where $\*\Pi_t$ is a coordinate projection operator.\footnote{A coordinate projection operator here means a matrix that selects out some coordinates of its input: i.e. a full-rank matrix with entries in $\{0,1\}$ and at most one $1$ in each row and column.}
Two such projections are important. The first is the model used for computation, determined by a coordinate projection matrix\footnote{As an example, in a shared memory architecture, each worker thread fetches the model from RAM into its own cache and computes an update based on the cached parameters. The parameters in the cache is the local view for the worker, which could differ from what is stored in RAM or in other workers' caches.} $\*{\hat \Pi}_t$.
The second is a coordinate projection $\*\Pi_*$ that would ``select out'' some components of a state $\*X_t$ of the parallel algorithm that would be representative of the state $\*x_t$ of the sequential algorithm.\footnote{For example, in a shared memory setup, $\*\Pi_* \*X_t$ could represent the model stored centrally in RAM.}
We formalize this equivalence assumption as follows.
{\setlength{\abovedisplayskip}{2pt}\setlength{\belowdisplayskip}{2pt}\begin{assumption}\label{assume projection}(Parallel version)
There exists coordinate projection matrices $\*\Pi_*$, $\*\Pi_t$, and $\*{\hat \Pi}_t$ such that for all $t \ge 0$, and any state vectors $\*Z_0, \ldots, \*Z_t$,
\[
    \*\Delta_t (\*Z_t, \ldots, \*Z_0)
    =
    \*{\hat \Pi}_t \*{\tilde \delta}_t(\*\Pi_t \*Z_t, \ldots, \*\Pi_0 \*Z_0),
\]
and it always holds that
\[\*\Pi_t \*M_\infty = \*\Pi_* \*M_{\infty}
\hspace{2em}\text{and}\hspace{2em}
\*M_{\infty} \*{\hat \Pi}_t = \*\Pi_* \*M_{\infty}.\]
\end{assumption}}
This assumption formalizes the idea that what a computation step does is (1) read some subset of the external storage, (2) treat it as if it were the state of the sequential algorithm and compute the sequential update, and (3) accumulate that update somewhere into external storage.
The latter part of this assumption encodes the idea that, if the algorithm has reached consensus, then any worker's ``view'' of the state should also be representative of the state of the sequential algorithm. 
With this assumption, if we set $\*x_t = \*\Pi_* \*M_{\infty} \*X_t$, which we call the \emph{consensus state} of the algorithm at time $t$, then the update step of the parallel algorithm under perfect communication is exactly in the form of (\ref{generalrulesequential}).
This formalizes our notion of a computation process being a ``parallel version'' of a sequential algorithm.

Now, consider the setting where there is no perfect communication and so workers fail to reach consensus state before stepping into the next iteration, resulting in a state which we refer to as being \emph{weakly consistent}.
In such setting, the computation events will depend on some intermediate version of the weights---a different sequence from the consensus state $\*x_t$---and so the parallel algorithm will not be exactly described by (\ref{generalrulesequential}).
To address this, we introduce a new weight sequence defined by $\*u_t = \*\Pi_t \*X_t$ which represents the \emph{local view} of the worker that updates at time $t$, as described above.
Then, the update step of the parallel, weakly consistent algorithm can be written as
\begin{equation}\label{generalrulesequential}
    \*x_{t+1} = \*x_t - \alpha_t \*{\tilde \delta}_t(\*u_t, \cdots, \*u_0).
\end{equation}

Note that due to weak consistency, $\{\*u_k\}_{k\leq t}$ is generally different from $\{\*x_k\}_{k\leq t}$.
%In a nutshell, 
%$x_t$ can be seen as aggregation of all the updates $\{\tilde \delta_k\}_{k\leq t}$ and it naturally captures the progress of $\mathcal{A}$ while
%$\{u_k\}_{k\leq t}$ denotes the parameter sequence used to compute $\delta_t$. 
For example, the two sequences can differ due to staleness \cite{lian2015asynchronous,agarwal2011distributed} or sparse connection \cite{lian2017can}.
%Here we define a single update happens no matter on which worker as one iteration. In this definition, synchronous algorithm is a special case, where single iteration corresponds to $n$ iteration in this definition, where $n$ refers to the number of parallel workers.

%\paragraph{General Bound for MixML}
%We proceed to obtain a general results of MixML. As formulated before, the progress of a given $\mathcal{A}$ combined with given protocol $\mathcal{C}$ can be seen as a series of "noise" updates compared to sequential $\mathcal{A}$. In other words, the bound on the local view sequence $\{u_t\}_{t\geq 0}$ and global view sequence $\{x_t\}_{t\geq 0}$ decides how "noisy" the running is. Formally, based on previous formulation, this bound is shown in the following Lemma.

\section{A Principled Proof Approach}\label{principled proof section}

%\todo{we assume...m here we show we can bonut the noise term with weak assumptions onthe algorithms,with that bound only depends on the bound text explaining delta, explain what is delta(x), more information about the tildes}
In this section we show how to prove convergence rates with MixML by ``adapting'' a convergence proof for a sequential algorithm into one for a parallel version.
We start by noting that our assumed causal ordering is also an ordering of the randomness in the learning algorithm, so we can define $\mathcal{F}=\{\mathcal{F}_t\}_{t\geq 0}$ which captures the algorithmic randomness that has happened up to time $t$.
Note that we are assuming the communication process is deterministic, so all this randomness is coming from the stochastic updates $\*{\tilde \delta}_t(\*u_t, \ldots, \*u_0)$.
If we define $\*\delta_t = \mathbb{E}[\*{\tilde \delta}_t]$ to be the \emph{expected} sequential-algorithm update, and for brevity define
{\setlength{\abovedisplayskip}{2pt}\setlength{\belowdisplayskip}{2pt}\begin{align*}
    \*\delta_t^{(x)} &= \*\delta_t(\*x_t, \cdots, \*x_0)
    &
    \*{\tilde \delta}_t^{(x)} &= \*{\tilde \delta}_t(\*x_t, \cdots, \*x_0) \\
    \*\delta_t^{(u)} &= \*\delta_t(\*u_t, \cdots, \*u_0)
    &
    \*{\tilde \delta}_t^{(u)} &= \*{\tilde \delta}_t(\*u_t, \cdots, \*u_0),
\end{align*}}
%We show that with weak assumptions on the algorithms, we are able to obtain a convergence bound for a parallel algorithm that only depends on its sequential updates and the sampling noise (in a stochastic setting).
% We start with some simplifying notations. Let $\delta_t^{(x)}=\delta_t(x_t, \cdots, x_0)$, i.e. the udpates of a sequential execution or a parallel execution with perfect communication. 
% Let $\delta_t^{(u)}=\delta_t(u_t, \cdots, u_0)$, i.e. the updates of a parallel execution with weakly consistent local views. 
% Let $\tilde{\delta}_t^{(u)}$ denote the stochastic version of $\delta_t^{(u)}$, that is, it is an estimator of $\delta_t^{(u)}$ but might contains noise from sampling, estimation, etc.
then the update rule of a parallel algorithm in a stochastic setting under the MixML framework can be written as:
\begin{equation}\label{Equation parallel update}
\begin{aligned}
    \*x_{t+1} &= \*x_t - \alpha_t\*{\tilde{\delta}}_t^{(u)} \\ \nonumber
    &= \underbrace{\*x_t - \alpha_t\*\delta_t^{(x)}}_{\mathclap{\text{Sequential Update}}} + \underbrace{\alpha_t(\*\delta_t^{(u)}-\*{\tilde{\delta}}_t^{(u)})}_{\mathclap{\text{Sampling Noise}}} + \underbrace{\alpha_t(\*\delta_t^{(x)}-\*\delta_t^{(u)})}_{\mathclap{\text{Weak Consistency Noise}}}.
\end{aligned}
\end{equation}
Note that the update in (\ref{Equation parallel update}) has three parts: the one that is associated with non-stochastic sequential update, sampling noise, and the noise from weak consistency. In other words, if we can obtain a bound on the weak consistency noise, then we can just fit this bound into the corresponding sequential proof (which was already able to ``handle'' the first two terms) and obtain the convergence bound for its parallel version.
% \todo{move up to the previous section}We first define the filtration in the execution of the algorithm.
% Note that the randomness (usually origins from the sampling in a stochastic setting) in the algorithms  forms a space of random events. Let $\mathcal{F}=\{\mathcal{F}_t\}_{t\geq 0}$ denote denote a filtration of the $\sigma$-field over such space with respect to the iterations. 
To do this, we need the following assumptions.
\begin{assumption}\label{assumesampler}
The sampling in the update is unbiased and has a bounded variance, specifically, for $\forall t\geq 0$,
\begin{align*}
\mathbb{E}\left[\*{\tilde{\delta}}_t^{(u)}\Big|\mathcal{F}_t\right]=\*\delta_t^{(u)}, \hspace{1em}\mathbb{E}\left[\|\*{\tilde{\delta}}_t^{(u)}-\*\delta_t^{(u)}\|^2\Big|\mathcal{F}_t\right] \leq \sigma^2
\end{align*}
\end{assumption}
\begin{assumption}\label{assumecontinous}
The update of the algorithm is Lipschitz continuous: there exists $\mathcal{L}$ such that for any $t\geq 0$, there exists $L_{k,t}$ that for any $\*y_t, \cdots, \*y_0$ and $\*z_t, \cdots, \*z_0\in\mathbb{R}^d$, 
\begin{align*}
    \textstyle
    \|\*\delta_t(\*y_t, \cdots, \*y_0)-\*\delta_t(\*z_t, \cdots, \*z_0)\| \leq \sum_{k=0}^{t}L_{k,t}\|\*y_k-\*z_k\|
\end{align*}
and $\sum_{k=0}^{t}L_{k,t}\leq\mathcal{L}$.
\end{assumption}
Based on the assumptions above, running an arbitrary learning algorithm in the form of (\ref{Equation parallel update}) under the MixML framework gives the bound shown in the following lemma.
\begin{lemma}\label{lemma general bound}
If $\{\alpha_t\}_{t\geq 0}$ is non-increasing, then
\begin{equation}
    \sum_{t=0}^{T-1}\alpha_t\mathbb{E}\|\*\delta_t^{(x)}-\*\delta_t^{(u)}\|^2 \leq 16(1+\xi)^2\xi^2\tmix^2{}\mathcal{L}^2\sum_{t=0}^{T-1}\alpha_t^3\mathbb{E}\|\*\delta_t^{(x)}\|^2 + 4(1+\xi)^2\xi^2\tmix{}\sigma^2\mathcal{L}^2\sum_{t=0}^{T}\alpha_t^3
\end{equation}
\end{lemma}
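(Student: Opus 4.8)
The plan is to bound the weak-consistency noise $\*\delta_t^{(x)}-\*\delta_t^{(u)}$ via the Lipschitz assumption and then control the deviation between the consensus iterate $\*x_k=\*\Pi_*\*M_\infty\*X_k$ and the local view $\*u_k=\*\Pi_k\*X_k$. First I would invoke Assumption~\ref{assumecontinous} and apply Cauchy--Schwarz with weights $L_{k,t}$, using $\sum_k L_{k,t}\le\mathcal{L}$, to get $\mathbb{E}\|\*\delta_t^{(x)}-\*\delta_t^{(u)}\|^2\le\mathcal{L}\sum_{k=0}^{t}L_{k,t}\,\mathbb{E}\|\*x_k-\*u_k\|^2$; this produces the first of the two factors of $\mathcal{L}^2$ and reduces the problem to bounding $\mathbb{E}\|\*x_k-\*u_k\|^2$. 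Using $\*\Pi_k\*M_\infty=\*\Pi_*\*M_\infty$ from Assumption~\ref{assume projection} I would rewrite $\*x_k-\*u_k=\*\Pi_k(\*X_k-\*M_\infty\*X_k)$, identifying this deviation with the distance of $\*X_k$ from its consensus $\*M_\infty\*X_k$.

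Next I would unroll the dynamics~(\ref{Equation prototype}). Iterating and repeatedly using $\*M_\infty\*M_t=\*M_\infty$ (Assumption~\ref{assume consensus}) collapses the consensus component, so, assuming consensus initialization $\*X_0=\*M_\infty\*X_0$, the deviation becomes $\*x_k-\*u_k=\sum_{s=0}^{k-1}\alpha_s\*B_{k,s}\,\*{\tilde\delta}_s^{(u)}$, where $\*B_{k,s}=\*\Pi_k\bigl[\bigl(\textstyle\prod_{j=s}^{k-1}\*M_j\bigr)-\*M_\infty\bigr]\*{\hat\Pi}_s$ is a \emph{deterministic} operator with $\|\*B_{k,s}\|\le(1+\xi)\xi\,2^{-\lfloor(k-s)/\tmix{}\rfloor}$ by Lemma~\ref{lemmamixingtime} together with $\|\*\Pi_k\|\le1$ and $\|\*{\hat\Pi}_s\|\le1$ (here I also use $\*\Delta_s=\*{\hat\Pi}_s\*{\tilde\delta}_s^{(u)}$ from Assumption~\ref{assume projection}).

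The crux is squaring this sum, and this is where the two different powers of $\tmix{}$ arise. I would split each update into its mean and a conditionally mean-zero noise, $\*{\tilde\delta}_s^{(u)}=\*\delta_s^{(u)}+\*n_s$ with $\mathbb{E}[\*n_s\mid\mathcal{F}_s]=0$ and $\mathbb{E}\|\*n_s\|^2\le\sigma^2$ (Assumption~\ref{assumesampler}). For the signal part I take norms and apply Cauchy--Schwarz across $s$, paying an extra $\sum_s 2^{-\lfloor(k-s)/\tmix{}\rfloor}\le2\tmix{}$; a second such geometric sum appears after the outer summation over $k$, together giving the $\tmix^2{}$ in the first term. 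For the noise part the point is to keep the vector structure: since the $\*B_{k,s}$ are deterministic and $\*n_s$ is conditionally unbiased, the cross terms in the expansion of $\mathbb{E}\bigl\|\sum_s\alpha_s\*B_{k,s}\*n_s\bigr\|^2$ vanish, leaving $\sum_s\alpha_s^2\,\mathbb{E}\|\*B_{k,s}\*n_s\|^2\le\sigma^2\sum_s\alpha_s^2\,2^{-2\lfloor(k-s)/\tmix{}\rfloor}$. This orthogonality removes one of the two geometric sums, which is exactly why the $\sigma^2$ term carries only a single power of $\tmix{}$. I would then use $\mathbb{E}\|\*\delta_s^{(u)}\|^2\le2\mathbb{E}\|\*\delta_s^{(x)}\|^2+2\mathbb{E}\|\*\delta_s^{(x)}-\*\delta_s^{(u)}\|^2$ to turn the signal part into the target $\mathbb{E}\|\*\delta_s^{(x)}\|^2$.

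Finally I would multiply by $\alpha_t$, sum over $t$, and reorganize the triple sum over $s<k\le t$: monotonicity of $\{\alpha_t\}$ gives $\alpha_t\alpha_s^2\le\alpha_s^3$, and summing the Lipschitz weights over the target index (again bounded by $\mathcal{L}$) supplies the second $\mathcal{L}$, collapsing everything into $\sum_s\alpha_s^3(\cdots)$. The main obstacle --- and the step that forces the leading constant up to $16$ --- is that the decomposition of the signal reintroduces a copy of the left-hand side in the form $\sum_s\alpha_s^3\,\mathbb{E}\|\*\delta_s^{(x)}-\*\delta_s^{(u)}\|^2$; I would move this across and absorb it, which doubles the signal constant and yields the claimed bound. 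Tracking the two $\tmix{}$ scalings through the reorganization, and checking that the absorption is valid in the relevant step-size regime, are where the real care is required.
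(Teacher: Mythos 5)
Your proposal is correct and follows essentially the same route as the paper's proof: reduce to $\mathbb{E}\|\*x_k-\*u_k\|^2$ via the Lipschitz assumption, identify this with the distance of $\*X_k$ from consensus, unroll the dynamics, split the updates into signal and conditionally mean-zero noise so that Lemma~\ref{lemmamixingtime} yields $\tmix^2$ for the squared geometric sum and only $\tmix$ for the orthogonal noise part, and finally absorb the recursive $\sum_t\alpha_t^3\mathbb{E}\|\*\delta_t^{(x)}-\*\delta_t^{(u)}\|^2$ term to double the constants. Your remark that the final absorption implicitly requires a step-size condition is a fair observation about a step the paper also glosses over, but it does not change the fact that the argument is the same.
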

We make a few observations on the bound in Lemma~\ref{lemma general bound}. The LHS measures the accumulated ``extra'' noise from parallelism as compared to a sequential algorithm.
Note that in the RHS, the bound depends on the update in the sequential algorithm and the sampling noise. Note that this bound is independent of the explicit expression of the sequential updates $\*\delta_t^{(x)}$ and does not contain any terms regarding parallel updates $\*{\tilde{\delta}}_t^{(u)}$ or the parallel views $\*u_t$. In other words, we can easily use this bound together with a proof for a sequential algorithm to obtain the convergence rate for its parallel version, as its proof will take two simple steps: (1) find the accumulated error terms as shown in the LHS of Lemma~\ref{lemma general bound}; and (2) fold in the parameters $\mathcal{L}$ and $\*\delta_t^{(x)}$ directly. 
%We will discuss this in detail in the following section.

%In this bound, $\mathcal{L}$ and $\delta_t^{(x)}$ are associated with learning algorithm while $\xi$ and $\tmix{}$ are associated with the communication protocol.  To obtain the convergence rate for a specific algorithm or specific protocol, we only need to fit in these parameters, as will be shown in the following section.

We can compare this bound to the previous work of \citet{alistarh2020elastic} on elastic consistency, where the co-relation of $\*x_t$ and $\*u_t$ is assumed specifically on SGD and has the form of:
$\mathbb{E}[\|\*x_t-\*u_t\||\mathcal{F}_t]\leq \alpha_t B$,
where $B$ is a constant.
Lemma~\ref{lemma general bound} provides more detailed insight about how updates from local view and global view can be differed, and how they are affected by the parameters of the algorithms.
\section{Applications}\label{application section}
In this section, we apply MixML to a variety of algorithms and show how it helps obtain tight convergence bounds that match case-specific analyses from previous works.
\subsection{Stochastic Gradient Descent.}
We start from Stochastic Gradient Descent (SGD), a widely adopted algorithm for large-scale machine learning \cite{zhang2004solving,bottou2010large}. 
SGD works by iteratively taking a step in the reverse direction of the stochastic gradient at the current point. 
Its update rule fits in MixML with $\*{\tilde{\delta}}_t^{(u)}=\nabla\tilde{f}(\*u_t)$, where $\nabla\tilde{f}(\*u_t)$ denotes the stochastic gradient at time t and $\mathbb{E}[\nabla\tilde{f}(\*u_t)]=\nabla f(\*u_t)$. The update rule of SGD can be written as:
\begin{equation}
    \*x_{t+1}=\*x_t-\alpha_t\nabla\tilde{f}(\*u_t).
\end{equation}
Consider running SGD on a smooth non-convex function with $L$-Lipchitz gradients, i.e. for any $\*x, \*y\in\mathbb{R}^d$, $\| \nabla f(\*x)-\nabla f(\*y)\| \leq L\|\*x-\*y\|$.
Then the following lemma holds.
\begin{lemma}\label{lemma SGD nonconvex}
If we run SGD on a smooth non-convex function for $T$ iterations, then it converges at the following rate:
\begin{equation}
    \sum_{t=0}^{T-1}\alpha_t\mathbb{E}\left\|\nabla f(\*x_t)\right\|^2 \leq 2(f(\*0)-f(\*x_T)) + \sigma^2L\sum_{t=0}^{T-1}\alpha_t^2 + \sum_{t=0}^{T-1}\alpha_t\mathbb{E}\|\nabla f(\*x_t)-\nabla f(\*u_t)\|^2.
\end{equation}
\end{lemma}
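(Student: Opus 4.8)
The plan is to run the standard ``descent lemma'' argument for non-convex SGD, but carefully keeping the local-view iterate $\*u_t$ separate from the consensus iterate $\*x_t$ so that the weak-consistency discrepancy $\nabla f(\*x_t) - \nabla f(\*u_t)$ survives as an explicit term on the right-hand side. First I would invoke $L$-smoothness of $f$ in its descent form, namely $f(\*x_{t+1}) \le f(\*x_t) + \langle \nabla f(\*x_t), \*x_{t+1} - \*x_t \rangle + \frac{L}{2}\|\*x_{t+1} - \*x_t\|^2$, and substitute the SGD update $\*x_{t+1} - \*x_t = -\alpha_t \nabla \tilde f(\*u_t)$. This produces a cross term $-\alpha_t \langle \nabla f(\*x_t), \nabla \tilde f(\*u_t)\rangle$ and a quadratic term $\frac{L\alpha_t^2}{2}\|\nabla \tilde f(\*u_t)\|^2$.

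Next I would take the conditional expectation given $\mathcal{F}_t$. Since $\*x_t$ and $\*u_t$ are $\mathcal{F}_t$-measurable, Assumption~\ref{assumesampler} replaces $\nabla \tilde f(\*u_t)$ by its mean $\nabla f(\*u_t)$ in the cross term, and bounds the second moment in the quadratic term via $\mathbb{E}[\|\nabla \tilde f(\*u_t)\|^2 \mid \mathcal{F}_t] \le \|\nabla f(\*u_t)\|^2 + \sigma^2$. The crucial algebraic move, and what I expect to be the main obstacle, is handling the inner product $\langle \nabla f(\*x_t), \nabla f(\*u_t)\rangle$: the two gradients are evaluated at different points, so there is no direct way to turn it into $\|\nabla f(\*x_t)\|^2$. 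The clean fix is the polarization identity $\langle a, b\rangle = \frac{1}{2}\|a\|^2 + \frac{1}{2}\|b\|^2 - \frac{1}{2}\|a - b\|^2$ with $a = \nabla f(\*x_t)$ and $b = \nabla f(\*u_t)$. This simultaneously extracts the descent term $-\frac{\alpha_t}{2}\|\nabla f(\*x_t)\|^2$, produces a $+\frac{\alpha_t}{2}\|\nabla f(\*x_t) - \nabla f(\*u_t)\|^2$ term (which is exactly the weak-consistency noise that later gets controlled by Lemma~\ref{lemma general bound}), and leaves a $-\frac{\alpha_t}{2}\|\nabla f(\*u_t)\|^2$ contribution.

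I would then combine this leftover $-\frac{\alpha_t}{2}\|\nabla f(\*u_t)\|^2$ with the quadratic contribution $+\frac{L\alpha_t^2}{2}\|\nabla f(\*u_t)\|^2$, yielding $\frac{\alpha_t}{2}(L\alpha_t - 1)\|\nabla f(\*u_t)\|^2$. Under the customary step-size condition $\alpha_t \le 1/L$ this term is non-positive and can simply be discarded, removing all explicit dependence on the unbounded quantity $\|\nabla f(\*u_t)\|^2$. The inequality at this stage reads $\mathbb{E}[f(\*x_{t+1}) \mid \mathcal{F}_t] \le f(\*x_t) - \frac{\alpha_t}{2}\|\nabla f(\*x_t)\|^2 + \frac{\alpha_t}{2}\|\nabla f(\*x_t) - \nabla f(\*u_t)\|^2 + \frac{L\alpha_t^2}{2}\sigma^2$.

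Finally I would take total expectations via the tower property, rearrange to isolate $\frac{\alpha_t}{2}\mathbb{E}\|\nabla f(\*x_t)\|^2$ on the left, and sum over $t = 0, \ldots, T-1$. The $\mathbb{E}[f(\*x_t)] - \mathbb{E}[f(\*x_{t+1})]$ differences telescope to $f(\*x_0) - \mathbb{E}[f(\*x_T)]$, and using the initialization $\*x_0 = \*0$ gives $f(\*0) - \mathbb{E}[f(\*x_T)]$. Multiplying through by $2$ then delivers exactly the claimed bound, with the $\frac{L\alpha_t^2}{2}\sigma^2$ terms aggregating into $\sigma^2 L \sum_{t=0}^{T-1}\alpha_t^2$ and the polarization remainder becoming $\sum_{t=0}^{T-1}\alpha_t \mathbb{E}\|\nabla f(\*x_t) - \nabla f(\*u_t)\|^2$. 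The only subtlety to flag is the implicit reliance on $\alpha_t \le 1/L$; if the lemma is meant to hold without that assumption, I would instead retain the $\|\nabla f(\*u_t)\|^2$ term and absorb it, but the stated clean form strongly suggests the step-size restriction is in force.
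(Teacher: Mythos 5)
Your proposal is correct and follows essentially the same route as the paper's proof: descent lemma, conditional expectation with Assumption~\ref{assumesampler}, polarization of the cross term to extract $\|\nabla f(\*x_t)\|^2$ and the weak-consistency term, cancellation of the $\|\nabla f(\*u_t)\|^2$ contributions, and telescoping. The step-size condition $\alpha_t \le 1/L$ you flag is indeed used silently in the paper as well (it drops the $\frac{\alpha_t}{2}(L\alpha_t-1)\|\nabla f(\*u_t)\|^2$ term without comment), so your explicit acknowledgment of it is a point in your favor rather than a deviation.
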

We can see in Lemma~\ref{lemma SGD nonconvex} that the convergence bound depends on the accumulated noise from parallelism as we formulated before. With perfect communication, $\*u_t=\*x_t, \forall t\geq 0$, then we recover the convergence rate for sequential SGD \cite{alistarh2020elastic}.
Note that update for SGD only depends on the current iteration, it is straightforward to verify for SGD $\mathcal{L}=L$.
By fitting in Lemma~\ref{lemma general bound} with $\mathcal{L}=L$ and $\*\delta_t^{(x)}=\nabla f(\*x_t)$, we are able to provide bound on parallel SGD as shown in the following Theorem.
\begin{theorem}\label{thm SGD}
If we run SGD on a smooth non-convex function for $T$ iterations under the framework of MixML, then SGD converges at the following rate:
\begin{equation}
    \sum_{t=0}^{T-1}\alpha_t\left(1-16(1+\xi)^2\xi^2\tmix^2{}L^2\alpha_t^2\right)\mathbb{E}\left\|\nabla f(\*x_t)\right\|^2 \leq 2(f(\*0)-f(\*x_T)) + \sigma^2L\sum_{t=0}^{T-1}\alpha_t^2 + 4(1+\xi)^2\xi^2\tmix{}\sigma^2L^2\sum_{t=0}^{T}\alpha_t^3.
\end{equation}
\end{theorem}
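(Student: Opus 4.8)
The plan is to treat Theorem~\ref{thm SGD} as a direct composition of Lemma~\ref{lemma SGD nonconvex} and Lemma~\ref{lemma general bound}, which is exactly the ``two-step'' recipe advertised after Lemma~\ref{lemma general bound}. Lemma~\ref{lemma SGD nonconvex} already isolates the SGD convergence bound up to a single weak-consistency noise term $\sum_{t=0}^{T-1}\alpha_t\mathbb{E}\|\nabla f(\*x_t)-\nabla f(\*u_t)\|^2$, so the only remaining work is to control this term via the generic mixing-time estimate and then rearrange.

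First I would verify that the hypotheses of Lemma~\ref{lemma general bound} actually hold under the SGD instantiation, rather than assuming them. Assumption~\ref{assumesampler} is satisfied because $\nabla\tilde f(\*u_t)$ is an unbiased estimate of $\nabla f(\*u_t)$ with variance at most $\sigma^2$. Assumption~\ref{assumecontinous} is satisfied because the SGD update depends only on the current iterate, so $\*\delta_t(\*y_t,\ldots,\*y_0)=\nabla f(\*y_t)$; the $L$-Lipschitz gradient hypothesis then gives $L_{t,t}=L$ and $L_{k,t}=0$ for $k<t$, whence $\mathcal{L}=L$. I would also note that $\{\alpha_t\}$ is taken non-increasing as Lemma~\ref{lemma general bound} requires. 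With the identifications $\*\delta_t^{(x)}=\nabla f(\*x_t)$ and $\*\delta_t^{(u)}=\nabla f(\*u_t)$, the left-hand side of Lemma~\ref{lemma general bound} coincides precisely with the weak-consistency term appearing in Lemma~\ref{lemma SGD nonconvex}.

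Next I would apply Lemma~\ref{lemma general bound} with $\mathcal{L}=L$ to obtain
\[
\sum_{t=0}^{T-1}\alpha_t\mathbb{E}\|\nabla f(\*x_t)-\nabla f(\*u_t)\|^2 \le 16(1+\xi)^2\xi^2\tmix^2{}L^2\sum_{t=0}^{T-1}\alpha_t^3\mathbb{E}\|\nabla f(\*x_t)\|^2 + 4(1+\xi)^2\xi^2\tmix{}\sigma^2L^2\sum_{t=0}^{T}\alpha_t^3,
\]
and substitute it into the right-hand side of Lemma~\ref{lemma SGD nonconvex}. The final step is purely algebraic: the summand $16(1+\xi)^2\xi^2\tmix^2{}L^2\alpha_t^3\mathbb{E}\|\nabla f(\*x_t)\|^2$ on the right has the same shape as the target $\alpha_t\mathbb{E}\|\nabla f(\*x_t)\|^2$ on the left (with an extra factor $\alpha_t^2$), so I would move it across the inequality and factor out $\alpha_t\mathbb{E}\|\nabla f(\*x_t)\|^2$ termwise, producing the coefficient $1-16(1+\xi)^2\xi^2\tmix^2{}L^2\alpha_t^2$ and reproducing the stated bound exactly.

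I do not expect a genuine obstacle, since all of the analytic work is delegated to the two prior lemmas; the only points requiring care are (i) correctly deriving $\mathcal{L}=L$ from Assumption~\ref{assumecontinous} instead of taking it for granted, and (ii) keeping the summation ranges straight — the gradient-norm sums run to $T-1$ while the pure-noise $\alpha_t^3$ sum runs to $T$, a discrepancy inherited directly from Lemma~\ref{lemma general bound} that must be preserved verbatim in the final statement.
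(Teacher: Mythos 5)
Your proposal is correct and follows essentially the same route as the paper: combine Lemma~\ref{lemma SGD nonconvex} with Lemma~\ref{lemma general bound} instantiated at $\mathcal{L}=L$ and $\*\delta_t^{(x)}=\nabla f(\*x_t)$, then move the $16(1+\xi)^2\xi^2\tmix^2{}L^2\alpha_t^3\mathbb{E}\|\nabla f(\*x_t)\|^2$ term to the left-hand side. If anything you are more careful than the paper's own writeup, which misstates $\mathcal{L}=1$ and drops the $\alpha_t^2$ factor inside the parenthesis in its final display; your version matches the theorem as stated.
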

With specifically assigned step size, we are also able to show the convergence rate depending on $T$ in the following Corollary:
\begin{corollary}\label{corollary SGD non-convex}
If we assign $\alpha_t=\frac{\sqrt{2(f(\*0)-f(\*x_T))}}{\sigma\sqrt{LT}+\tmix{}}$, and run SGD for sufficiently large number of iterations such that $T\geq \frac{64(f(\*0)-f(\*x_T))\xi^2(1+\xi)^2\tmix^2{}L}{\sigma^2}$, then SGD converges at the following rate:
\begin{equation}
    \frac{1}{T} \sum_{t=0}^{T-1}\mathbb{E}\left\|\nabla f(\*x_t)\right\|^2 \leq \frac{4\sigma\sqrt{2(f(\*0)-f(\*x_T)L}}{\sqrt{T}} + \frac{2\sqrt{2(f(\*0)-f(\*x_T))}\tmix{}}{T} + \frac{16(1+\xi)^2\xi^2\tmix{}(f(\*0)-f(\*x_T))L}{T}.
\end{equation}
\end{corollary}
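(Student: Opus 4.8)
The plan is to specialize Theorem~\ref{thm SGD} to the prescribed constant step size and then reduce everything to elementary algebra. Throughout, abbreviate $F = f(\*0)-f(\*x_T)$ and $C = (1+\xi)^2\xi^2$, so that the chosen step size is the constant $\alpha_t = \alpha = \frac{\sqrt{2F}}{\sigma\sqrt{LT}+\tmix{}}$ for every $t$. In particular $\{\alpha_t\}_{t\ge 0}$ is (trivially) non-increasing, so the hypothesis of Theorem~\ref{thm SGD} is satisfied and I may invoke its conclusion directly. The entire corollary is then a matter of lower-bounding the left-hand side and upper-bounding each summand on the right.

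The crucial step is to show that the per-iteration coefficient $1 - 16C\tmix^2{}L^2\alpha_t^2$ is bounded below by $\tfrac12$, and this is exactly where the iteration-count threshold on $T$ enters. Since $(\sigma\sqrt{LT}+\tmix{})^2 \geq \sigma^2 LT$, I would first bound $\alpha^2 \leq \frac{2F}{\sigma^2 LT}$, whence $16C\tmix^2{}L^2\alpha^2 \leq \frac{32C\tmix^2{}LF}{\sigma^2 T}$; the assumption $T\geq \frac{64 F C\tmix^2{}L}{\sigma^2}$ makes the right-hand side at most $\tfrac12$. Consequently $1-16C\tmix^2{}L^2\alpha_t^2\geq\tfrac12$ uniformly in $t$, and the left side of Theorem~\ref{thm SGD} is at least $\frac{\alpha}{2}\sum_{t=0}^{T-1}\mathbb{E}\|\nabla f(\*x_t)\|^2$.

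Next I would substitute the constant step size into the right-hand side, using $\sum_{t=0}^{T-1}\alpha_t^2 = T\alpha^2$ and treating the cubic step-size sum as $\Theta(T\alpha^3)$, and then multiply the whole inequality by $\frac{2}{\alpha T}$ to isolate $\frac1T\sum_{t=0}^{T-1}\mathbb{E}\|\nabla f(\*x_t)\|^2$ on the left. This leaves three terms to control on the right: $\frac{4F}{\alpha T}$, $2\sigma^2 L\alpha$, and a term of the form $8C\tmix{}\sigma^2 L^2\alpha^2$. For the first, I expand $\frac{1}{\alpha} = \frac{\sigma\sqrt{LT}+\tmix{}}{\sqrt{2F}}$ so that $\frac{4F}{\alpha T}$ splits cleanly into $\frac{2\sigma\sqrt{2FL}}{\sqrt T}+\frac{2\sqrt{2F}\tmix{}}{T}$, the second piece being precisely the middle term of the corollary. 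For the second term I use $\sigma\sqrt{LT}+\tmix{}\geq\sigma\sqrt{LT}$ to get $2\sigma^2L\alpha \leq \frac{2\sigma\sqrt{2FL}}{\sqrt T}$, which combines with the first piece above to yield the claimed $\frac{4\sigma\sqrt{2FL}}{\sqrt T}$. For the last term, $\alpha^2\le \frac{2F}{\sigma^2 LT}$ gives $8C\tmix{}\sigma^2 L^2\alpha^2 \leq \frac{16C\tmix{}FL}{T}$, matching the final term.

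I expect the only genuinely load-bearing step to be the coefficient bound of the second paragraph: without establishing $1-16C\tmix^2{}L^2\alpha_t^2\ge\tfrac12$, the $\mathbb{E}\|\nabla f(\*x_t)\|^2$ terms cannot be moved to the left with a clean constant, and the whole reduction to Theorem~\ref{thm SGD} fails. Everything after that is bookkeeping, where the only care needed is the off-by-one in the index of the cubic step-size sum and the repeated use of $\sigma\sqrt{LT}+\tmix{}\ge\sigma\sqrt{LT}$ to collapse the denominators into the stated $1/\sqrt T$ and $1/T$ rates.
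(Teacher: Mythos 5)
Your proposal is correct and follows the only natural route: the paper itself omits a proof of this corollary, but specializing Theorem~\ref{thm SGD} to the constant step size, using the threshold on $T$ to force $1-16(1+\xi)^2\xi^2\tmix^2{}L^2\alpha^2\geq\tfrac12$, and then splitting $1/\alpha = (\sigma\sqrt{LT}+\tmix{})/\sqrt{2F}$ is exactly what the statement is designed for, and all of your constants check out. The one loose end is the cubic sum $\sum_{t=0}^{T}\alpha_t^3=(T+1)\alpha^3$, which you flag but do not resolve: taken literally it inflates the last term by a factor of $(T+1)/T$, a discrepancy at the same level as the paper's own constant bookkeeping and not a substantive gap.
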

Corollary~\ref{corollary SGD non-convex} reveals two important insights: First, the communication does not affect the leading term in the convergence (the sampling complexity in the $1/\sqrt{T}$ term) and thus the optimization still obtains the tight $O\left(\frac{(f(\*0)-f(\*x_T))L\sigma^2}{\epsilon^4}\right)$ bound to obtain a stationary point $\*x$ such that $\|\nabla f(\*x)\|\leq \epsilon$ \cite{carmon2019lower,arjevani2019lower}.
Second,
it shows a unified convergence rate for SGD at a rate of $O(1/\sqrt{T}+\tmix{}/T)$. 
With perfect communication, i.e. $\tmix{}=1$, the rate matches result for sequential SGD.
By assigning different $\tmix{}$, we are able to obtain convergence bound for SGD on different protocols. We next provide detailed discussion in three cases. 
\paragraph{Centralized Asynchronous SGD.}
We start from the centralized asynchronous setting \cite{lian2015asynchronous} where both message-passing and shared memory are considered. Previous works \citet{lian2015asynchronous,de2015taming,recht2011hogwild} show that SGD with asynchronous communication converges at a rate of $O(1/\sqrt{T}+\tau/T)$ where $\tau$ refers to the maximum delay among workers conducting consecutive optimization steps. Recall in the case studies from Section~\ref{modeling section} that $\tau=\tmix{}$. As a result, Corollary~\ref{corollary SGD non-convex} provides a bound that matches with previous result.

\paragraph{Sparsified SGD.}
We continue to discuss the sparsified SGD \cite{wang2018atomo}. For simplicity we consider the general case where only a subset of model parameters can be successfully sent and received in the communication \cite{yu2018distributed}. In other words, the communication frequency of each coordinate of local models is reduced by a factor of $\eta$ due to sparsification. That is to say, for each coordinate, the mixing time is enlarged by a factor of $1/\eta$. As a result, the convergence rate for sparsified SGD then can be directly obtained by fitting $\tmix{}=1/\eta$ to Corollary~\ref{corollary SGD non-convex} as $O(1/\sqrt{T}+1/\eta T)$ (This corresponds to the Theorem 1 in \cite{yu2018distributed}).

\paragraph{Decentralized SGD.}
We proceed to discuss decentralized SGD \cite{lian2017can,lu2020moniqua}. 
Comparing Theorem 1 in \cite{lian2017can} and Corollary~\ref{corollary SGD non-convex}, MixML improves the rate on communication complexity term while both rates match on the sampling complexity term $1/\sqrt{T}$. Specifically, for communication term, \citet{lian2017can} shows $n/\lambda^2T$ where $\lambda$ refers to the spectral gap of the communication matrix. In contrast, MixML shows a dependency of $\tmix{}/T$. As shown in Markov Chain theory \cite{levin2017markov} and case studies in Section~\ref{modeling section}, here $\tmix{}$ is upper bounded by $\log(n)/\lambda$ when $\lambda$ is not zero and $1$ otherwise. That implies $\tmix{} < \log(n)/\lambda < n/\lambda^2$, and thus MixML obtains a tighter bound. 

\subsection{Stochastic Adaptive Methods.}
Aside from SGD, there are a variety of learning algorithms being widely adopted in learning systems. As SGD has been well studied in the literature regarding its parallel versions, the other algorithms are not fully understood in their parallel executions. Here we fill this gap and show how their convergence bounds in the parallel setting can be obtained under MixML.
We proceed to discuss other algorithms beyond SGD. We consider a general udpate rule in the form of the following:
\begin{equation}\label{updateruleAdam}
    \*x_{t+1} = \*x_t - \alpha_t\*V_t^{-p}\*m_t.
\end{equation}
where
\begin{align*}
    \*m_t &= \beta_1\*m_{t-1} + (1-\beta_1)\*g_t\\
    %v_t &= \max\{\beta_2v_{t-1} + (1-\beta_2)g_t^2, v_{t-1}\}\\
    \*v_t &= \max\{\beta_2\*v_{t-1} + (1-\beta_2)\*g_t^2, \*v_{t-1}\}\\
    \*V_t &= \operatorname{diag}(\*v_t), \hspace{1em} \*g_t = \nabla\tilde{f}(\*u_t).
\end{align*}
where $\nabla\tilde{f}(\*u_t)$ denotes the stochastic gradient computed at $t$-th iteration. Note that Equation~\ref{updateruleAdam} is a general form that can cover a varirty of algorithms. For example, Momentum SGD \cite{yang2016unified} ($p=0$), AMSGrad \cite{reddi2019convergence} ($p=1/2$), RMSProp\footnote{We consider the corrected version of RMSProp, as the original version was shown diverging on several simple problems in \cite{reddi2019convergence}.} \cite{reddi2019convergence} ($p=1/2$, $\beta_1=0$), PAdam \cite{chen2018convergence}, etc. We generally refer this setting as \emph{Stochastic Adaptive Methods (SAM).}
Considering the update of SAM depends on all its previous states, it generally fits the setting of MixML. Specifically, we can fit SAM in MixML by
letting $\*{\tilde{\delta}}_t^{(u)}=\*V_t^{-p}\*m_t$ in Equation~\ref{Equation parallel update}.
And $\mathcal{L}$ for SAM can be obtained via the following Lemma.
\begin{lemma}\label{lemma L SAM}
Let $c$ denote a positive numerical constant\footnote{This constant prevents the denominator from being zero, which is commonly adopted assumption in previous works \cite{chen2018convergence}.} such that for any $1\leq j\leq d$ it holds $|\*e_j^\top \*v_0|\geq c$, and for any $\*x\in\mathbb{R}^d$ it holds that $\|\nabla\tilde{f}(\*x)\|_\infty\leq G_\infty$, then for SAM:
\begin{equation}
    \mathcal{L} = \frac{(c+2pG_\infty)L^2}{c^{2p+1}}\max\left\{2, \frac{4pG_\infty}{c}\right\}
\end{equation}
\end{lemma}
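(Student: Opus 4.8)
The plan is to bound the Lipschitz constant of the (random) per-sample update map $\*{\tilde\delta}_t = \*V_t^{-p}\*m_t$, viewed as a function of the argument sequence $\*x_t,\ldots,\*x_0$ (which enters only through the gradients $\*g_k=\nabla\tilde f(\*x_k)$), and then transfer the bound to $\*\delta_t=\mathbb{E}[\*{\tilde\delta}_t]$ for free: conditional expectation is a contraction (Jensen), so any coordinatewise Lipschitz bound with coefficients $L_{k,t}$ on $\*{\tilde\delta}_t$ is inherited by $\*\delta_t$. Since the update is a product of the preconditioner $\*V_t^{-p}$ and the momentum buffer $\*m_t$, I would split a perturbation with the product rule, writing $\*V_t(\*y)^{-p}\*m_t(\*y)-\*V_t(\*z)^{-p}\*m_t(\*z) = \*V_t(\*y)^{-p}\bigl(\*m_t(\*y)-\*m_t(\*z)\bigr) + \bigl(\*V_t(\*y)^{-p}-\*V_t(\*z)^{-p}\bigr)\*m_t(\*z)$, and bound the two summands separately. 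The problem then reduces to (i) magnitude bounds on each factor and (ii) Lipschitz bounds on each factor with respect to every input $\*x_k$.

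For the magnitudes, the running max keeps $\*v_t$ monotonically nondecreasing, so $\*e_j^\top\*v_t\ge \*e_j^\top\*v_0\ge c$ for every coordinate $j$ and every $t$; hence the diagonal operator norm obeys $\|\*V_t^{-p}\|\le c^{-p}$. Since $\*m_t=(1-\beta_1)\sum_{k=0}^t\beta_1^{t-k}\*g_k$ is a subconvex combination of gradients with $\|\*g_k\|_\infty\le G_\infty$, we also get $\|\*m_t\|_\infty\le G_\infty$. The momentum factor is the easy one: perturbing $\*x_k$ changes $\*m_t$ only through $\*g_k$, with weight $(1-\beta_1)\beta_1^{t-k}$, and $\nabla\tilde f$ is $L$-Lipschitz, so the first summand contributes a coefficient of order $c^{-p}(1-\beta_1)\beta_1^{t-k}L$ to $L_{k,t}$, whose sum over $k$ telescopes geometrically to a constant.

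The delicate factor is the preconditioner. Along the chain $\*x_k\mapsto \nabla_j\tilde f(\*x_k)\mapsto(\nabla_j\tilde f(\*x_k))^2$ entering $\*v_t$, then through $v\mapsto v^{-p}$, I would use that $v\mapsto v^{-p}$ is $pc^{-p-1}$-Lipschitz on $[c,\infty)$, that squaring is $2G_\infty$-Lipschitz on $[-G_\infty,G_\infty]$, and the $L$-Lipschitzness of the stochastic gradient, giving a per-step sensitivity of order $2pG_\infty L\,c^{-p-1}$. The main obstacle is propagating this through the nondifferentiable running-max recursion $\*v_t=\max\{\beta_2\*v_{t-1}+(1-\beta_2)\*g_t^2,\,\*v_{t-1}\}$: a naive appeal to the $1$-Lipschitzness of $\max$ lets a perturbation of an old $\*x_k$ persist undamped into every later $\*v_t$, which would make $\sum_k L_{k,t}$ grow with $t$ instead of staying bounded by a constant $\mathcal{L}$. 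Controlling this is where the real care lies — one must exploit that $\*g_k^2$ enters only with weight $(1-\beta_2)$ and is thereafter discounted by $\beta_2$ inside the moving average, together with the uniform lower bound $\*v_t\ge c$, so that the effective coefficients decay in $t-k$ and remain summable. This is the step I expect to be the crux.

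Finally I would assemble the two contributions. The second product-rule summand scales like $\|\*m_t\|_\infty$ times the preconditioner sensitivity, and bounding $\*V_t(\*y)^{-p}-\*V_t(\*z)^{-p}$ carefully introduces an extra factor of order $\|\*V_t^{-p}\|\le c^{-p}$, which is what produces the $c^{-(2p+1)}$ scaling appearing in the statement. Collecting the momentum term and the preconditioner term over the common denominator $c^{2p+1}$ and extracting the larger of the two numerical prefactors yields the factor $\max\{2,\,4pG_\infty/c\}$ and a numerator proportional to $(c+2pG_\infty)$, matching the claimed $\mathcal{L}$; pinning down the exact numerical constant is then routine bookkeeping rather than a conceptual step, and the bound is an upper estimate that need not be tight (e.g.\ setting $p=0$ collapses the preconditioner to the identity, leaving only the $\*m_t$ analysis). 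Taking conditional expectation at the very end carries the whole bound from $\*{\tilde\delta}_t$ to $\*\delta_t$, completing the identification of $\mathcal{L}$.
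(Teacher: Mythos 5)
Your proposal follows essentially the same route as the paper's proof: perturb one argument $\*x_k$ at a time, split the sensitivity of $\*V_t^{-p}\*m_t$ into a momentum part and a preconditioner part (the paper does this via the quotient rule on the scalar ratio $\*m_{t,j}/\*v_{t,j}^{p}$ rather than your product-rule split, which is the same computation), use $\*v_{t,j}\geq c$ and $\|\*g_k\|_\infty\leq G_\infty$ for the magnitude bounds, chain through the $L$-Lipschitzness of $\nabla\tilde f$, and sum the geometrically decaying per-step coefficients $(1-\beta_1)\beta_1^{t-k}c^{-p}$ and $2p(1-\beta_2)\beta_2^{t-k}G_\infty c^{-(p+1)}$ to obtain $(c+2pG_\infty)/(Mc^{p+1})$ with $M=\min\{c^p/2,\,c^{p+1}/(4pG_\infty)\}$, which is exactly the stated $\mathcal{L}$ once $1/M=c^{-p}\max\{2,4pG_\infty/c\}$ is substituted (the extra $L^2$ arises because the paper actually proves the bound in squared, normalized form). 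The one step you flag as the crux --- propagating the perturbation through the running max $\*v_t=\max\{\beta_2\*v_{t-1}+(1-\beta_2)\*g_t^2,\,\*v_{t-1}\}$ --- is precisely the step the paper does not carry out: its proof silently replaces $\*v_{t,j}$ by the no-max closed form $(1-\beta_2)\sum_{k\leq t}\beta_2^{t-k}\*g_{k,j}^2$ and differentiates that, which is what produces the $\beta_2^{t-k}$ decay needed for summability. Your worry is legitimate: with the max, the branch selecting $\*v_{t-1}$ passes the old sensitivity through undamped, so the effective coefficient for $\*g_{k,j}$ decays only like $\beta_2^{n_k}$ where $n_k$ counts the updating steps in $(k,t]$, and $\sum_k L_{k,t}$ is no longer obviously bounded independently of $t$. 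In short, your plan coincides with the paper's wherever the paper actually argues, and the unresolved step you identify is a gap in the paper's own proof rather than a defect specific to your approach.
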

We obtain the convergence bound for SAM as shown in the follow Theorem.
\begin{theorem}\label{thm SAM}\label{}
If we run SAM on a smooth non-convex function with Liptchiz constant being L for T iterations under the framework of MixML, then SAM converges at the rate below
\begin{equation}
    \sum_{t=0}^{T-1}\alpha_t\mathbb{E}\left\|\nabla f(\*x_t)\right\|^2
    \leq C_1 + C_2\sum_{t=0}^{T-1}\alpha_t^2 + C_3\sigma^2\sum_{t=0}^{T-1}\alpha_t^2 + C_4\tmix{}\sigma^2\sum_{t=0}^{T}\alpha_t^3.
\end{equation}

where
\begin{align*}
    C_1 = & 2G_\infty^{2p}\left(\mathbb{E}f\left(\frac{\*x_0}{1-\beta_1}\right) - \mathbb{E}f^*\right)\\
    C_2 = & 2\left(3L + \frac{6L\beta_1^2}{1-\beta_1}\right)\frac{G_\infty^{2-2p}d}{(1-\beta_2)^{2p}\left(1-\frac{\beta_1}{\beta_2^{2p}}\right)}\\\
    C_3 = & \frac{4LG_\infty^{2p}}{(1-\beta_1)^2}\\
    C_4 = & 8(1+\xi)^2\xi^2\frac{(c+2pG_\infty)^2L^4G_\infty^{2p}}{c^{4p+2}}\left(\frac{2(1+\beta_1)(2-\beta_1)G_\infty^{2p}}{(1-\beta_1)^2} + \frac{2-\beta_1}{1-\beta_1}\right)\max\left\{4, \frac{16p^2G_\infty^2}{c^2}\right\}
\end{align*}
\end{theorem}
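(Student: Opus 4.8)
The plan is to follow the ``principled proof approach'' of Section~\ref{principled proof section}: first establish a \emph{sequential-style} descent inequality for SAM in which the weak consistency noise $\sum_{t}\alpha_t\mathbb{E}\|\*\delta_t^{(x)}-\*\delta_t^{(u)}\|^2$ appears as an isolated additive term, and then discharge that term by invoking Lemma~\ref{lemma general bound} together with the value of $\mathcal{L}$ supplied by Lemma~\ref{lemma L SAM}. All of the problem-specific labor lives in the first step; the passage to the weakly consistent setting is essentially free once the noise is isolated.

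For the sequential analysis I would follow the standard recipe for momentum-based adaptive methods. First I would introduce the shifted iterate $\*z_t = \frac{\*x_t-\beta_1\*x_{t-1}}{1-\beta_1}$ (with $\*x_{-1}=\*0$, so that $\*z_0=\frac{\*x_0}{1-\beta_1}$, exactly the argument appearing in $C_1$), and compute $\*z_{t+1}-\*z_t$ so that the momentum recursion collapses into a clean increment driven by $\*V_t^{-p}\*g_t$. Next I would control the preconditioner: since $\|\nabla\tilde f\|_\infty\le G_\infty$ and $|\*e_j^\top\*v_0|\ge c$, the $\max$-recursion defining $\*v_t$ is monotone and pinned between $c$ and a $G_\infty^2$-scale upper bound, so each entry of $\*V_t^{-p}$ lies in $[G_\infty^{-2p},c^{-p}]$. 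Multiplying the eventual bound through by $G_\infty^{2p}$ (the source of the $G_\infty^{2p}$ prefactors in $C_1,C_3,C_4$) turns the preconditioned gradient term into a clean $\mathbb{E}\|\nabla f(\*x_t)\|^2$ on the left-hand side. I would then apply $L$-smoothness to $f(\*z_{t+1})$, take conditional expectations, and use unbiasedness (Assumption~\ref{assumesampler}) to separate the descent term from (i) the sampling-variance term, which telescopes into $C_3\sigma^2\sum\alpha_t^2$; (ii) the momentum/second-moment bias terms, bounded via $\|\*V_t^{-p}\*m_t\|^2$ and a geometric-series argument requiring $\beta_1<\beta_2^{2p}$ (yielding the $\frac{1}{(1-\beta_2)^{2p}(1-\beta_1/\beta_2^{2p})}$ and $d$ factors of $C_2$); and (iii) a cross term $\langle\nabla f(\*x_t),\*\delta_t^{(x)}-\*\delta_t^{(u)}\rangle$ arising because the stochastic gradient is evaluated at the local view $\*u_t$ rather than at the consensus iterate $\*x_t$. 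Splitting that cross term by Young's inequality absorbs a multiple of $\mathbb{E}\|\nabla f(\*x_t)\|^2$ back into the left-hand side and leaves the residual $K\sum_t\alpha_t\mathbb{E}\|\*\delta_t^{(x)}-\*\delta_t^{(u)}\|^2$ with $K = 2G_\infty^{2p}\big(\tfrac{2(1+\beta_1)(2-\beta_1)G_\infty^{2p}}{(1-\beta_1)^2}+\tfrac{2-\beta_1}{1-\beta_1}\big)$.

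The final step is mechanical. I would bound $K\sum_t\alpha_t\mathbb{E}\|\*\delta_t^{(x)}-\*\delta_t^{(u)}\|^2$ with Lemma~\ref{lemma general bound}, plugging in $\mathcal{L}$ from Lemma~\ref{lemma L SAM}; the $\sigma^2$-term of that lemma becomes exactly $C_4\tmix{}\sigma^2\sum\alpha_t^3$, since $C_4 = 4K(1+\xi)^2\xi^2\mathcal{L}^2$ after substituting $\mathcal{L}^2$. The remaining term $16(1+\xi)^2\xi^2\tmix^2\mathcal{L}^2\sum\alpha_t^3\mathbb{E}\|\*\delta_t^{(x)}\|^2$ is controlled by noting that, unlike in SGD where $\*\delta_t^{(x)}=\nabla f(\*x_t)$, the SAM update is uniformly bounded ($\mathbb{E}\|\*\delta_t^{(x)}\|^2\le dG_\infty^2/c^{2p}$ from the preconditioner and momentum bounds), so it contributes only a lower-order $\sum\alpha_t^3$ quantity that is absorbed into the stated terms under a non-increasing step size. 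Collecting prefactors then yields $C_1,\dots,C_4$.

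I expect the main obstacle to be the sequential descent analysis of the second step, specifically the correlation between the adaptive preconditioner $\*V_t^{-p}$ and the current stochastic gradient $\*g_t$, which prevents pulling $\*V_t^{-p}$ outside the conditional expectation. The usual remedy---replacing $\*V_t^{-p}$ by the predictable $\*V_{t-1}^{-p}$ and charging the discrepancy to a telescoping sum over $\|\*V_t^{-p}-\*V_{t-1}^{-p}\|$ that is finite thanks to the monotonicity enforced by the $\max$ in the $\*v_t$ recursion---together with the momentum bookkeeping needed to produce the precise $\beta_1$-dependent constants in $C_2$ and $C_3$, is where essentially all the effort concentrates. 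Once the inequality is obtained in the isolated-noise form, MixML renders the parallel extension routine.
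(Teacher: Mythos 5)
Your proposal follows essentially the same route as the paper's proof: the paper also introduces the momentum-corrected iterate $\*h_t = \*x_t + \frac{\beta_1}{1-\beta_1}(\*x_t-\*x_{t-1}) = \frac{\*x_t-\beta_1\*x_{t-1}}{1-\beta_1}$ (your $\*z_t$), applies smoothness to $f(\*h_{t+1})$, splits the increment into a sequential part $T_s$ (discharged by citing the sequential analysis of \citet{zhou2018convergence}, which produces $C_1$, $C_2$, $C_3$) and a weak-consistency part $T_p$, splits the cross term $\langle\nabla f(\*h_t),T_p\rangle$ by Young's inequality with a parameter $\gamma$ chosen as $\gamma^{-1}=2(1+2\lambda)G_\infty^{2p}$, and then invokes Lemma~\ref{lemma general bound} with the $\mathcal{L}$ of Lemma~\ref{lemma L SAM}; your reverse-engineered constant $K$ matches the paper's $2G_\infty^{2p}(2+\lambda)(\gamma^{-1}+1)$ exactly, so $C_4$ comes out the same way.

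The one place you diverge is the disposal of the residual $16(1+\xi)^2\xi^2\tmix^2\mathcal{L}^2\sum_t\alpha_t^3\mathbb{E}\|\*\delta_t^{(x)}\|^2$ from Lemma~\ref{lemma general bound}, and your shortcut does not quite deliver the stated bound. Bounding $\mathbb{E}\|\*\delta_t^{(x)}\|^2\le dG_\infty^2/c^{2p}$ uniformly leaves a term proportional to $\tmix^2\sum_t\alpha_t^3$ with \emph{no} $\sigma^2$ factor and with $\tmix^2$ rather than $\tmix$; none of $C_1$, $C_2\sum\alpha_t^2$, $C_3\sigma^2\sum\alpha_t^2$, $C_4\tmix\sigma^2\sum\alpha_t^3$ dominates such a term, so it cannot be ``absorbed into the stated terms'' merely because the step size is non-increasing --- you would need an explicit extra condition (e.g.\ $\alpha_t\tmix\mathcal{L}$ bounded, or an additional additive term in the theorem). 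The paper instead expands $\*\delta_t^{(x)}=\*V_t^{-p}\*m_t$ via the momentum recursion to bound $\|\*\delta_t^{(x)}\|^2\le c^{-2p}\|(1-\beta_1)\sum_k\beta_1^{t-k}\nabla f(\*x_k)\|^2$ and folds the resulting $\alpha_t^3\mathbb{E}\|\nabla f(\*x_t)\|^2$ contribution back into the left-hand side (implicitly under a small-step-size condition, as in the $\bigl(1-16(1+\xi)^2\xi^2\tmix^2L^2\alpha_t^2\bigr)$ factor of Theorem~\ref{thm SGD}). Neither treatment is fully rigorous as written, but if you want to land on the theorem exactly as stated you should follow the paper's route of converting $\|\*\delta_t^{(x)}\|^2$ back into gradient norms rather than a uniform constant. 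Everything else in your plan, including the predictable-preconditioner device for handling the correlation between $\*V_t^{-p}$ and $\*g_t$, is consistent with the sequential machinery the paper imports.
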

Theorem~\ref{thm SAM} provides a more general bound compared to Theorem~\ref{thm SGD} and it reveals the dependency of mixing time on a general learning algorithm under MixML. 
If we assign $\tmix{}=1$, then we recover the convergence rate of, for example, the Momentum SGD \cite{yang2016unified}, PAdam \cite{zhou2018convergence}, RMSProp \cite{zhou2018convergence}, etc.
And we also improves several results, e.g. parallel Momentum SGD \cite{yu2019distributed}.

In general, MixML reveals for parallel learning, the communication complexity is non-dominant in the final rate as $\tmix{}$ is associated with a fast decreasing  term ($O(\alpha^3)$). This actually shows the parallelism does not increase the total complexity compared to the sample complexity of a sequential method on smooth non-convex problem \cite{carmon2019lower,arjevani2019lower}.

\begin{figure*}[t]
%\centering
    \subfigure[Training Loss over Epochs\newline Reduced Frequency Method]{
        \includegraphics[width=0.23\textwidth]{./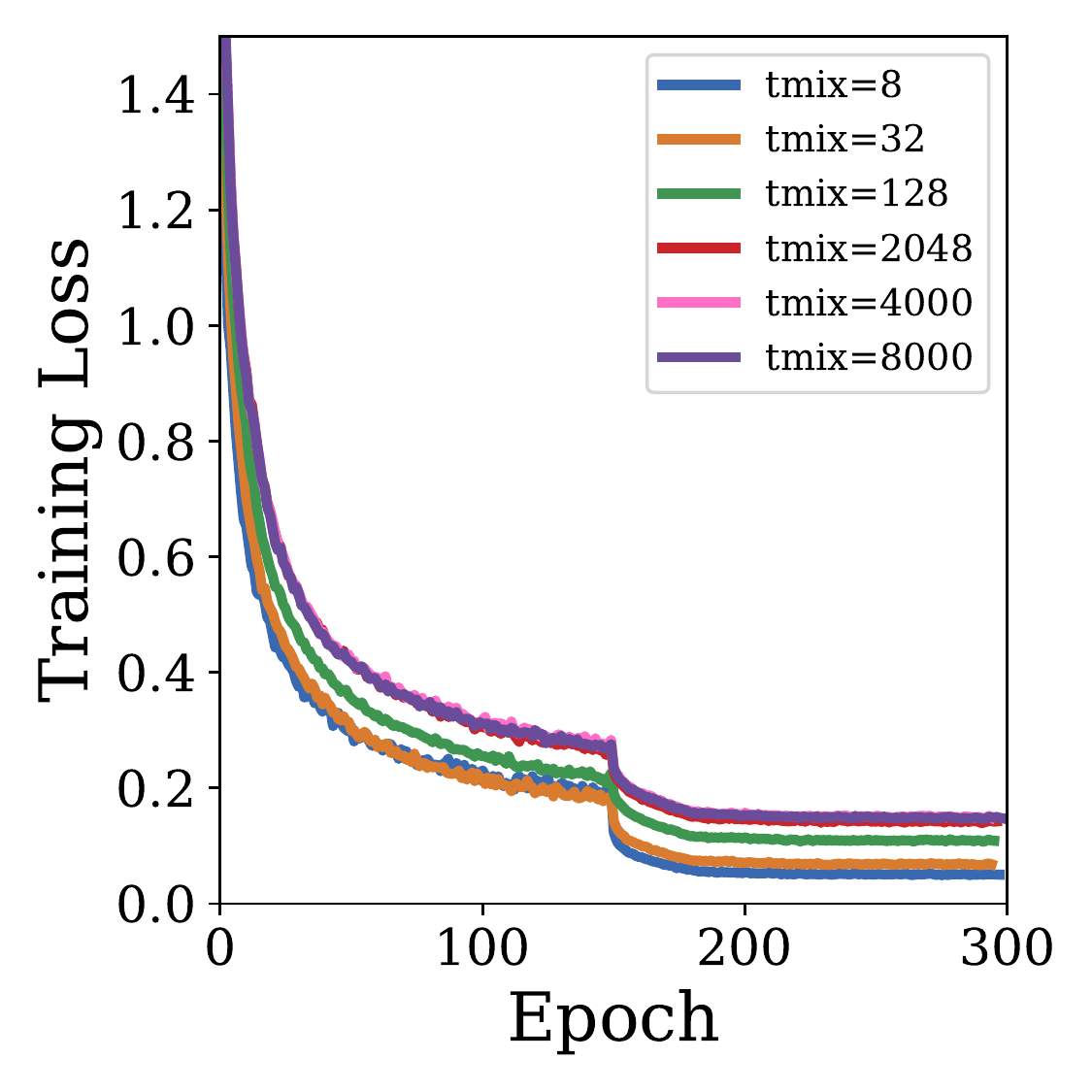}
        \label{train_loss_fre}
    }
     \subfigure[Training Loss over Epochs\newline Slack Matrix Method]{
        \includegraphics[width=0.23\textwidth]{./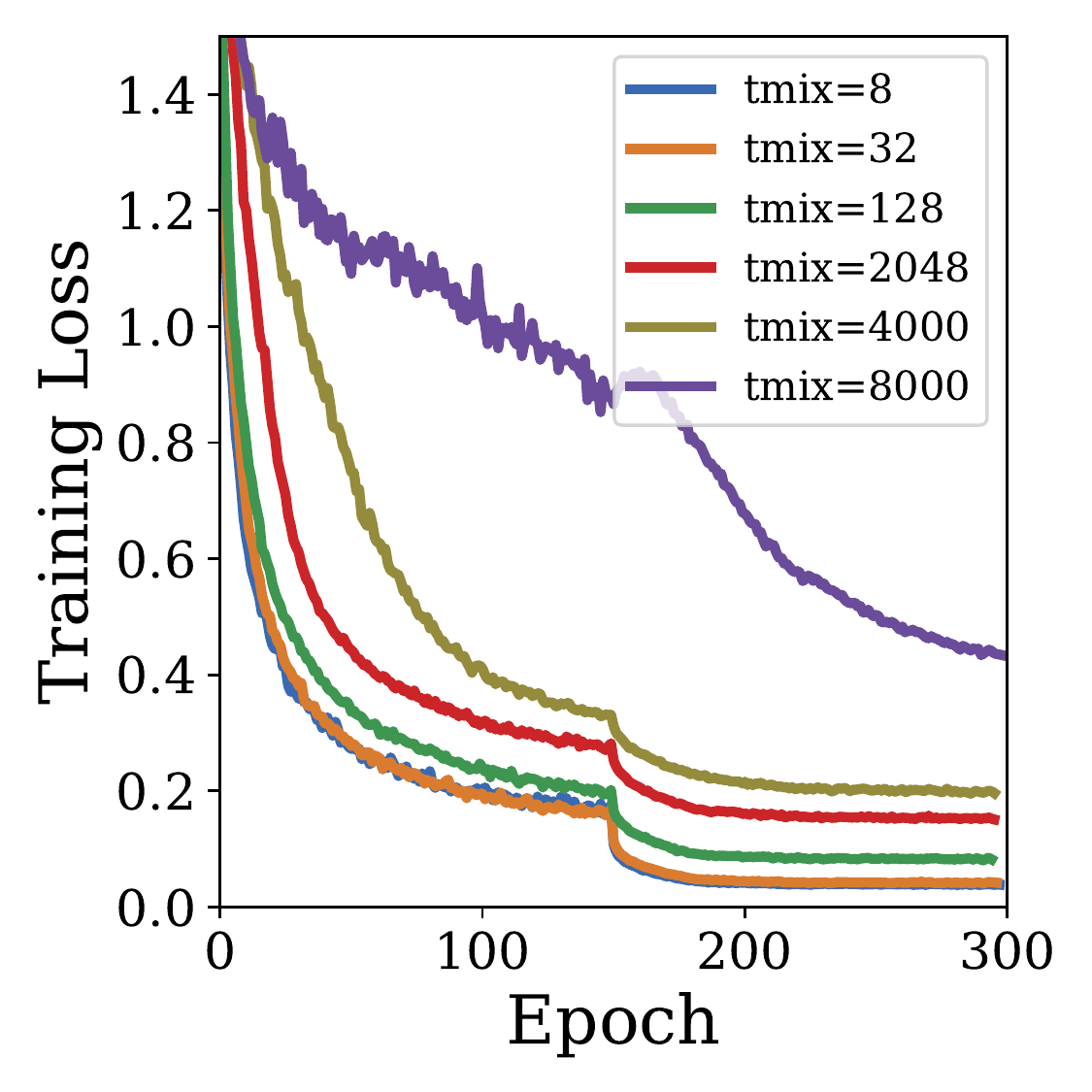}
        \label{train_loss_gamma}
    }
%\centering
    \subfigure[
    Stationary Distance \newline Reduced Frequency Method
    ]{
        \includegraphics[width=0.23\textwidth]{./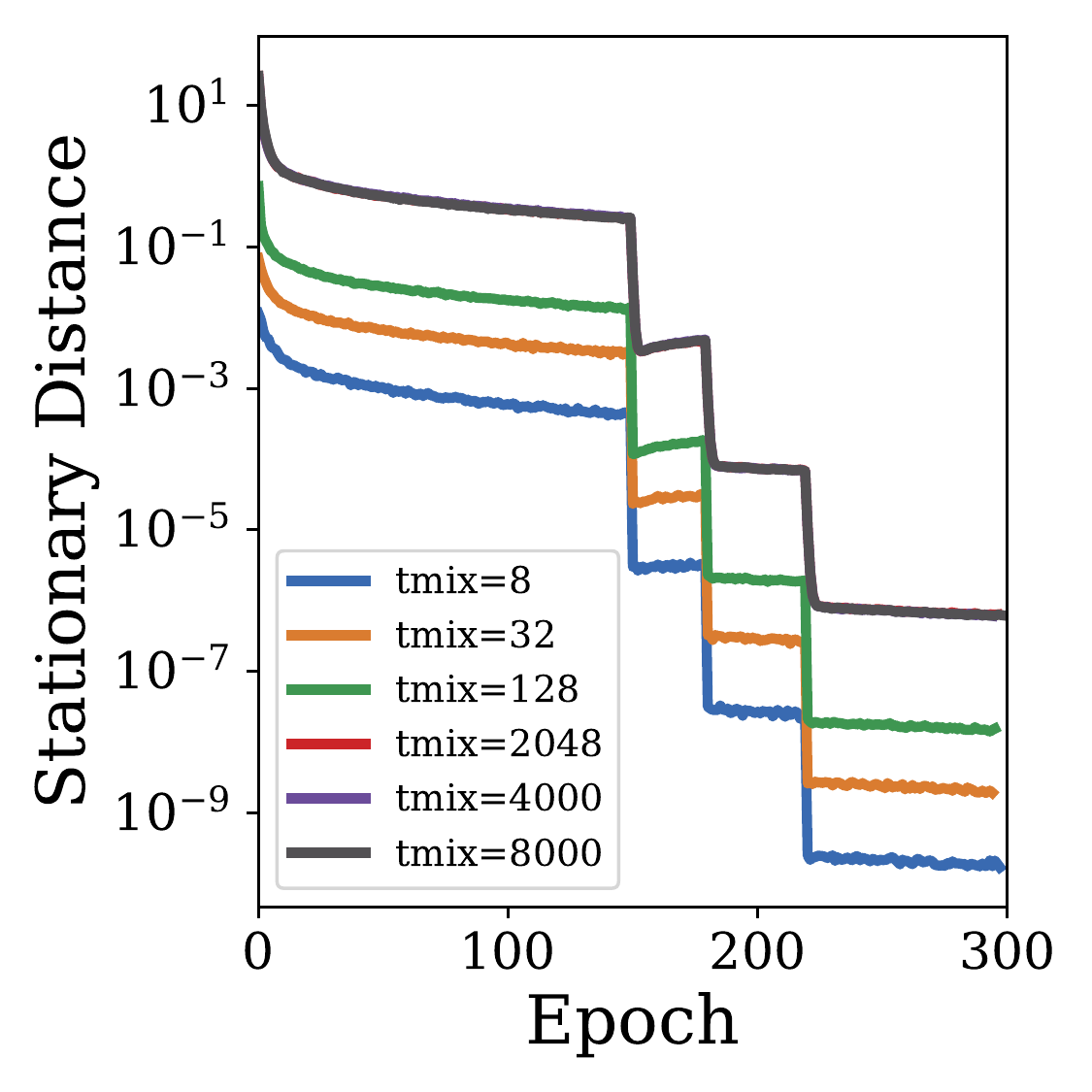}
        \label{consensus_fre}
    }
%\centerin
    %\centering
    \subfigure[Stationary Distance \newline Slack Matrix Method]{
        \includegraphics[width=0.23\textwidth]{./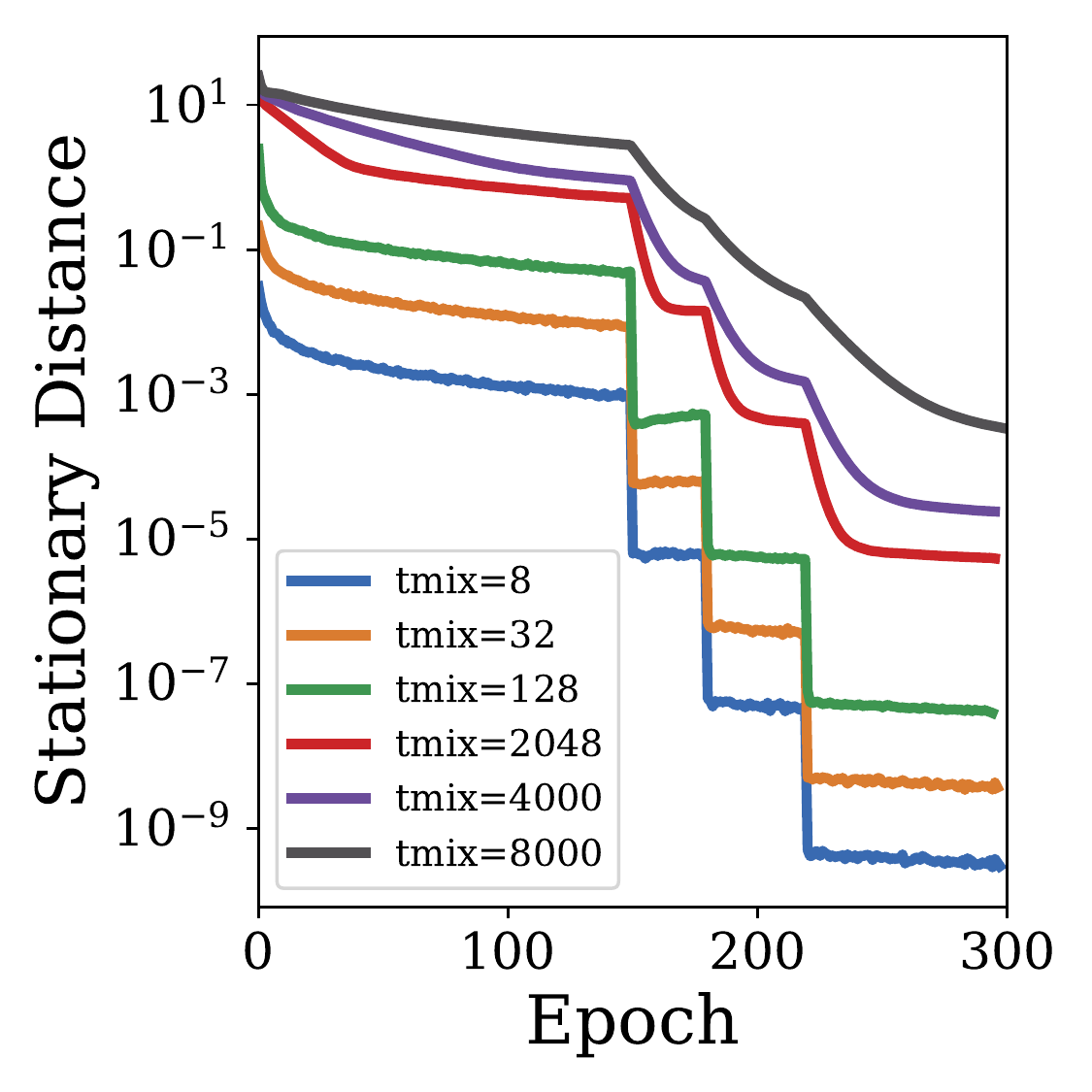}
        \label{consensus_gamma}
    }
    \caption{Measurement of training loss and stationary distance over epochs with different mixing time under reduced frequency method and slack matrix method.}
    \label{performance}
\end{figure*}
\begin{figure}
    \centering
    \includegraphics[width=0.55\textwidth]{./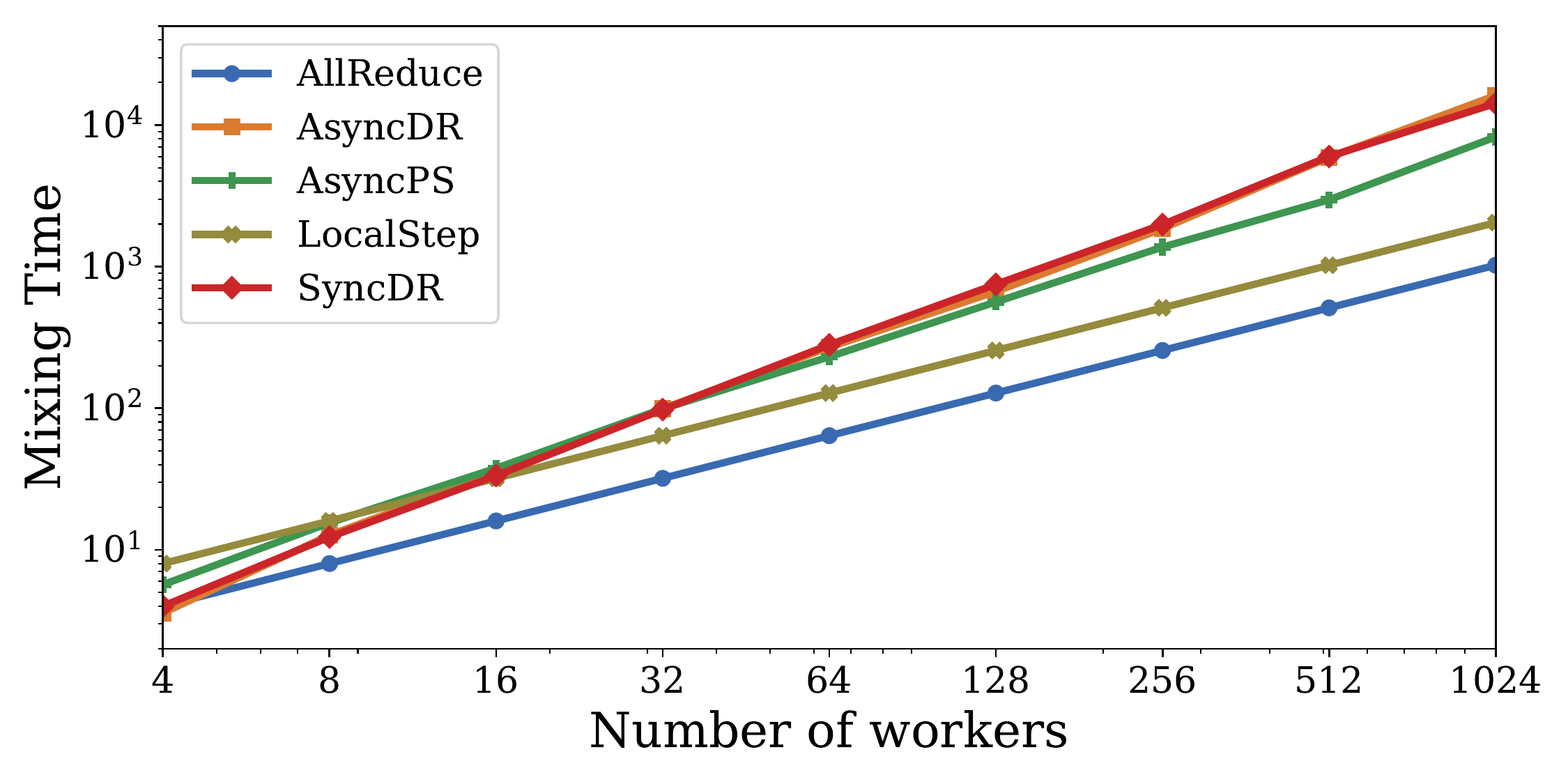}
    \caption{Measurement of Mixing Time among different protocols.}
    \label{exp protocols}
\end{figure}
\section{Experiments}\label{Experiments section}
In this section, we validate our theoretical results.
We first evaluate the mixing time in a variety of protocols that are widely adopted in different algorithms. Then we investigate how different mixing time affects the convergence on an identical problem via training Resnet20 on CIFAR10.
%Finally, we investigate how different optimizers behave under extremely poor communication, i.e. extremely large mixing time.

\myparagraph{System Configuration.}
All the models and training scripts in this section are implemented in PyTorch and run on an Ubuntu 16.04 LTS cluster using a SLURM workload manager running CUDA 9.2, configured with 4 NVIDIA Tesla P100 GPUs. We launch one thread as one worker and let them use OpenMPI as the communication backend. 

\myparagraph{Mixing Time in Different Protocols.}
We start from evaluating the mixing time in the following protocols:
\begin{enumerate}[nosep]
    \item \textbf{AllReduce} \cite{gropp1999using}: AllReduce is a centralized protocol from MPI, it allows each worker obtain averaged value from other workers via atomic broadcast and reduce operations. It is straightforward to verify in theory the mixing time of AllReduce is $n$, where $n$ is the number of parallel workers.
    \item \textbf{Local Step} \cite{lin2018don}: This protocol lets workers conduct $m$ optimization steps before stepping into AllReduce. That means, it increases the mixing time of AllReduce by a factor of $m$ and result in a mixing time of $mn$ (we set $m=2$ in our experiment.).
    \item \textbf{Asynchronous Parameter Server (AsyncPS)} \cite{li2014communication}: This protocol is an asynchronous version of Parameter Server implementation, where workers can query the model from a centralized server without global synchronization.
    \item \textbf{Synchronous Ring (SyncDR)} \cite{lian2017can}: This protocol connects all the workers using a ring overlay network and lets workers average parameters with their two neighbors. Synchronous version of this algorithm has a synchronization step that maintains the barrier over workers.
    \item \textbf{Asynchronous Ring (AsyncDR)} \cite{lian2017asynchronous}: This is an asynchronous version of Decentralized Ring, where workers can randomly choose one neighbor to average their parameters with between two adjacent optimization steps.
\end{enumerate}
We run each protocol for 1000 times and plot the results in Figure~\ref{exp protocols}. We can see that in general, as the number of workers increases, the mixing time increases in all of the protocols. Specifically, AllReduce and Local Step increases at a slower rate compared to other protocols as they have synchronization over time and in some sense closer to the perfect communication. On the other hand, we can see that of all the other three protocols, AsyncDR grows quickest compared to SyncDR and AsyncPS as it contains both decentralization and asynchrony, and thus more time or iterations are needed for the workers to "mix" when the system scales up. 

\myparagraph{Convergence under Various Mixing Time.}
We proceed to discuss how convergence of parallel learning can be affected by the mixing time. 
We launch 8 workers, collaborating to train Resnet20 on CIFAR10 using  SGD, and let them communicate via synchronously averaging parameters with all the others. Formally, let $\*x_i$ denote the model on worker $i$ and $n$ denote the number of workers, this communication scheme can be expressed as: $\*x_i\leftarrow \*x_i + \frac{1}{n}\sum_{j=1}^{n}(\*x_j-\*x_i), \forall i\in\{1, \cdots, n\}$. The mixing time for this protocol is $n$.
We now adopt two different ways to reduce the mixing time: 1) \textbf{Reduced Frequency.} As it simply reduced the frequency of workers participate in the communication. A worker is allowed to conduct multiple optimization steps before communication. 2) \textbf{Slack Matrix.} where this method refers to adding a positive parameter $\gamma$ and modify the protocol as: $\*x_i\leftarrow \*x_i +  \frac{\gamma}{n}\sum_{j=1}^{n}(\*x_j-\*x_i), \forall i\in\{1, \cdots, n\}$. In theory, this will increase the mixing time by a factor of $\gamma^{-1}$.
To measure the consensus among workers, we further define \emph{stationary distance} as $\frac{1}{n}\sum_{i=1}^{n}\left\|\*x_i-\*x_\infty\right\|^2$, where $\*x_\infty=\frac{1}{n}\sum_{j=1}^{n}\*x_j$ denotes the consensus of this protocol \cite{boyd2004fastest,lian2017can}.

We plot the results in Figure~\ref{performance}.
We can see that in general, a larger mixing time slows down the rate for workers to reach consensus, and thus affecting the convergence rate. This is aligned with our intuition and theory. On the other hand, of the two methods, we can see that in general, reduced frequency method is more robust than slack matrix method as in the extreme case ($\tmix{}$ is increased by 1000X), reduced frequency can still allow learning to follow a small and less noisy curve. This finding matches several empirical results from previous works \cite{lin2018don,stich2018local}.
An interesting observation is when $\tmix{}$ is large, the Slack Matrix method has much more unstable curve compared to the Reduced Frequency method. This happens because the Slack Matrix method suffers from extra numerical error. Note that when $\tmix{} = 8000$ (and correspondingly, $\gamma$ is approximately $1e-4$), the average distance among workers is around $1e1$, which implies in average each coordinate differs around $1e-5$ (Resnet20 has $0.27M$ params). That implies when multiplied by $\gamma$, each coordinate will be updated by a number around $1e-9$. Because IEEE 754 float only has around $6$ effective decimal digits, this method incurs precision errors here.
\section{Conclusion}
In this paper we propose MixML, a general framework for analyzing weakly consistent parallel learning. We abstract the communication process in parallel training as a sequence of transitions to a state and define a parameter called mixing time that quantifies the communication protocol. 
We provably show in a variety of algorithms how mixing time can be used to easily obtain convergence bound. Our experimental results validate our theory and reveals how mixing time affects the convergence of learning algorithms in a parallel setting.

\bibliography{main}

\newpage
\begin{center}
{\huge\textbf{Supplementary Material}}
\end{center}
%\maketitle

\appendix
\paragraph{Proof to Lemma 1}
\begin{proof}
Let $N=\left\lfloor\frac{t-s}{\tmix{}}\right\rfloor$, we obtain
\begin{align*}
    &\left\|\prod_{k=s}^{t-1}\*M_k\*X-\*M_\infty \*X\right\|\\
    \overset{\text{Assumption~\ref{assume consensus}}}{=} & \left\|\prod_{k=s+N\tmix{}}^{t-1}\*M_k\left(\prod_{k=s}^{s+N\tmix{}-1}\*M_k\*X\right)-\prod_{k=s+N\tmix{}}^{t-1}\*M_k\left(\*M_\infty \*X\right)\right\|\\
    \leq & \left\|\prod_{k=s+N\tmix{}}^{t-1}\*M_k\right\|\left\|\prod_{k=s}^{s+N\tmix{}-1}\*M_k\*X-\*M_\infty \*X\right\|\\
    \overset{\text{Assumption~\ref{assume scale bound}}}{\leq} & \xi\left\|\prod_{k=s}^{s+N\tmix{}-1}\*M_k\*X-\*M_\infty \*X\right\|\\
    \overset{\text{Assumption~\ref{assume consensus}}}{\leq} & \xi\left\|\prod_{k=s+(N-1)\tmix{}}^{s+N\tmix{}-1}\*M_k\left(\prod_{k=s}^{s+(N-1)\tmix{}-1}\*M_k\*X\right)-\prod_{k=s+(N-1)\tmix{}}^{s+N\tmix{}-1}\*M_k\left(\*M_\infty \*X\right)\right\|\\
    \overset{\text{Assumption~\ref{assume mixing time}}}{\leq} & 2^{-1}\xi\left\|\prod_{k=s}^{s+(N-1)\tmix{}-1}\*M_k\*X-\*M_\infty \*X\right\|\\
    \leq & \cdots\\
    \leq & 2^{-N}\xi\left\|\*X-\*M_\infty \*X\right\|\\
    \leq & 2^{-N}\xi(\|\*X\|+\|\*M_\infty\|\|\*X\|)\\
    \overset{\text{Assumption~\ref{assume scale bound}}}{\leq} & 2^{-N}(1+\xi)\xi\left\|\*X\right\|
 \end{align*}
That completes the proof.
\end{proof}

\paragraph{Proof to Lemma 2}
\begin{proof}
Let $\*\Omega_t=\*{\hat{\Pi}}_t(\*{\tilde{\delta}}_t^{(u)}-\*\delta_t^{(u)}), \forall t\geq 0$, 
from the formulation in Section~\ref{MixML section} we obtain
\begin{align*}
    %line 1
    & \mathbb{E}\|\*x_t -  \*u_t\|^2\\
    %line 2
    \overset{\text{Assumption~\ref{assume projection}}}{=} & \mathbb{E}\|\*\Pi_*\*M_\infty \*X_t-\*\Pi_t\*X_t\|^2\\
    %line 3
    \overset{\text{Assumption~\ref{assume consensus}}}{=} & \mathbb{E}\|\*\Pi_t(\*I-\*M_\infty)\*X_t\|^2\\
    %line 4
    \leq & \mathbb{E}\|\*\Pi_t\|\|(\*I-\*M_\infty)\*X_t\|^2\\
    %line 5
    \leq & \mathbb{E}\|(\*I-\*M_\infty)\*X_t\|^2\\
    %line 6
    \overset{\text{Equation~(\ref{Equation prototype})}}{=} & \mathbb{E}\left\|-\sum_{k=0}^{t-1}\prod_{m=k}^{t-1}\*M_{m}\*\Delta_k - \left(-\sum_{k=0}^{t-1}\*M_\infty\*\Delta_{k}\right)\right\|^2\\
    %line 7
    \overset{\text{Assumption~\ref{assume projection}}}{=} & \mathbb{E}\left\|-\sum_{k=0}^{t-1}\prod_{m=k}^{t-1}\*M_{m}\*{\hat{\Pi}}_k\*\delta^{(u)}_{k}+\sum_{k=0}^{t-1}\prod_{m=k}^{t-1}\*M_{m}\*\Omega_{k} - \left(-\sum_{k=0}^{t-1}\*M_\infty\*{\hat{\Pi}}_k\*\delta^{(u)}_{k} + \sum_{k=0}^{t-1}\*M_\infty\*\Omega_{k}\right)\right\|^2\\
    %line 8
    = & \mathbb{E}\left\|\sum_{k=0}^{t-1}\prod_{m=k}^{t-1}\*M_m\*{\hat{\Pi}}_k\*\delta^{(u)}_{k} - \sum_{k=0}^{t-1}\*M_\infty\*{\hat{\Pi}}_k\*\delta^{(u)}_{k}\right\|^2+ \mathbb{E}\left\|\sum_{k=0}^{t-1}\prod_{m=k}^{t-1}\*M_m\*\Omega_{k} - \sum_{k=0}^{t-1}\*M_\infty\*\Omega_{k}\right\|^2\\
    %line 5
    \overset{\text{Assumption~\ref{assumesampler}}}{=} & \mathbb{E}\left(\sum_{k=0}^{t-1}\left\|\prod_{m=k}^{t-1}\*M_m\*{\hat{\Pi}}_k\*\delta^{(u)}_{k} - \*M_\infty\*{\hat{\Pi}}_k\*\delta^{(u)}_{k}\right\|\right)^2 + \mathbb{E}\sum_{k=0}^{t-1}\left\|\prod_{m=k}^{t-1}\*M_m\*\Omega_{k} - \*M_\infty\*\Omega_{k}\right\|^2\\
    \overset{\text{Lemma 1}}{\leq} & (1+\xi)^2\xi^2\mathbb{E}\left(\sum_{k=0}^{t-1}2^{-\left\lfloor\frac{t-k-1}{\tmix{}}\right\rfloor}\|\*{\hat{\Pi}}_k\*\delta^{(u)}_{k}\|\right)^2 + (1+\xi)^2\xi^2\mathbb{E}\sum_{k=0}^{t-1}\left(2^{-\left\lfloor\frac{t-k-1}{\tmix{}}\right\rfloor}\|\*\Omega_{k}\|\right)^2\\
\end{align*}
Summing over from $t=0$ to $T-1$, we obtain
\begin{align*}
    &\sum_{t=0}^{T-1}\alpha_t\mathbb{E}\|\*x_t -  \*u_t\|^2\\
    \leq & (1+\xi)^2\xi^2\mathbb{E}\sum_{t=0}^{T-1}\alpha_t\left(\sum_{k=0}^{t-1}2^{-\left\lfloor\frac{t-k-1}{\tmix{}}\right\rfloor}\|\*{\hat{\Pi}}_k\*\delta^{(u)}_{k}\|\right)^2 + (1+\xi)^2\xi^2\mathbb{E}\sum_{t=0}^{T-1}\alpha_t\sum_{k=0}^{t-1}\left(2^{-\left\lfloor\frac{t-k-1}{\tmix{}}\right\rfloor}\|\*\Omega_{k}\|\right)^2\\
    \overset{\text{Assumption~\ref{assumesampler}}}{\leq} & 4(1+\xi)^2\xi^2\tmix^2{}\sum_{t=0}^{T-1}\alpha_t^3\mathbb{E}\|\*\delta_t^{(u)}\|^2 + 2(1+\xi)^2\xi^2\tmix{}\sigma^2\sum_{t=0}^{T}\alpha_t^3
\end{align*}
Note that
\begin{align*}
    \sum_{t=0}^{T-1}\alpha_t\mathbb{E}\|\*\delta_t^{(x)}-\*\delta_t^{(u)}\|^2 \leq & \sum_{t=0}^{T-1}\alpha_t\mathbb{E}\left(\sum_{k=0}^{t}L_{k,t}\|\*u_k-\*x_k\|\right)^2\\
    = & \mathcal{L}^2\sum_{t=0}^{T-1}\alpha_t\mathbb{E}\left(\sum_{k=0}^{t}\frac{L_{k,t}}{\mathcal{L}}\|\*u_k-\*x_k\|\right)^2\\
    \leq & \mathcal{L}^2\sum_{t=0}^{T-1}\alpha_t\sum_{k=0}^{t}\frac{L_{k,t}}{\mathcal{L}}\mathbb{E}\|\*u_k-\*x_k\|^2\\
    \leq & \mathcal{L}^2\sum_{t=0}^{T-1}\alpha_t\mathbb{E}\|\*u_t-\*x_t\|^2
\end{align*}
Putting our bound for $\sum_{t=0}^{T-1}\alpha_t\mathbb{E}\|\*u_t-\*x_t\|^2$, we obtain
\begin{align*}
    \sum_{t=0}^{T-1}\alpha_t\mathbb{E}\|\*\delta_t^{(x)}-\*\delta_t^{(u)}\|^2 \leq & 4(1+\xi)^2\xi^2\tmix^2{}\mathcal{L}^2\sum_{t=0}^{T-1}\alpha_t^3\mathbb{E}\|\*\delta_t^{(u)}\|^2 + 2(1+\xi)^2\xi^2\tmix{}\mathcal{L}^2\sigma^2\sum_{t=0}^{T}\alpha_t^3\\
    \leq & 8(1+\xi)^2\xi^2\tmix^2{} \mathcal{L}^2\sum_{t=0}^{T-1}\alpha_t^3\mathbb{E}\|\*\delta_t^{(x)}\|^2 + 8(1+\xi)^2\xi^2\tmix^2{} \mathcal{L}^2\sum_{t=0}^{T-1}\alpha_t^3\mathbb{E}\|\*\delta_t^{(u)}-\*\delta_t^{(x)}\|^2\\
    & + 2(1+\xi)^2\xi^2\tmix{}\sigma^2 \mathcal{L}^2\sum_{t=0}^{T}\alpha_t^3
\end{align*}
Rearrange the terms, we obtain
\begin{align*}
    \sum_{t=0}^{T-1}\alpha_t\mathbb{E}\|\*\delta_t^{(x)}-\*\delta_t^{(u)}\|^2 \leq 16(1+\xi)^2\xi^2\tmix^2{}\mathcal{L}^2\sum_{t=0}^{T-1}\alpha_t^3\mathbb{E}\|\*\delta_t^{(x)}\|^2 + 4(1+\xi)^2\xi^2\tmix{}\sigma^2\mathcal{L}^2\sum_{t=0}^{T}\alpha_t^3
\end{align*}
That completes the proof.
\end{proof}

\paragraph{Proof to Lemma 3}
\begin{proof}
Let $\*\zeta_t=\nabla f(\*u_t)-\nabla\tilde{f}(\*u_t)$, by the Lipschitzness assumption, we have
\begin{align*}
    f(\*x_{t+1}) = f(\*x_t-\alpha_t\nabla f(\*u_k)+\alpha_t\*\zeta_t) \leq f(\*x_t) - \alpha_t\left\langle\nabla f(\*x_t), \nabla f(\*u_k)-\*\zeta_t\right\rangle + \frac{\alpha_t^2L}{2}\left\|\nabla f(\*u_k)-\*\zeta_t\right\|^2
\end{align*}
Taking expectation with respect to $\sigma$-field on both sides, we haue
\begin{align*}
    \mathbb{E}\left[f(\*x_{t+1})|\mathcal{F}_t\right]
    \leq & f(\*x_t) - \alpha_t\mathbb{E}\left[\left\langle\nabla f(\*x_t), \nabla f(\*u_t)-\*\zeta_t\right\rangle|\mathcal{F}_t\right] + \frac{\alpha_t^2L}{2}\mathbb{E}\left[\left\|\nabla f(\*u_t)-\*\zeta_t\right\|^2|\mathcal{F}_t\right]\\
    \leq & f(\*x_t) - \alpha_t\mathbb{E}\left[\left\langle\nabla f(\*x_t), \nabla f(\*u_t)\right\rangle|\mathcal{F}_t\right] + \frac{\alpha_t^2L}{2}\mathbb{E}\left[\left\|\*\zeta_t\right\|^2|\mathcal{F}_t\right] + \frac{\alpha_t^2L}{2}\mathbb{E}\left[\left\|\nabla f(\*u_t)\right\|^2|\mathcal{F}_t\right]\\
    \leq & f(\*x_t) - \frac{\alpha_t}{2}\|\nabla f(\*x_t)\|^2 - \frac{\alpha_t}{2}\mathbb{E}\left[\|\nabla f(\*u_t)\|^2|\mathcal{F}_t\right] + \frac{\alpha_t}{2}\mathbb{E}\left[\|\nabla f(\*x_t) - \nabla f(\*u_t)\|^2|\mathcal{F}_t\right]\\
        & + \frac{\alpha_t^2L}{2}\sigma^2 + \frac{\alpha_t^2L}{2}\mathbb{E}\left[\left\|\nabla f(\*u_t)\right\|^2|\mathcal{F}_t\right]
\end{align*}
rearrange the terms, we obtain
\begin{align*}
    2\mathbb{E}\left[f(\*x_{t+1})|\mathcal{F}_t\right] \leq 2f(\*x_t) - \alpha_t\|\nabla f(\*x_t)\|^2 + \alpha_t\left\|\nabla f(\*x_t) - \nabla f(\*u_t)\right\|^2 + \alpha_t^2L\sigma^2
\end{align*}
Take full expectation we haue
\begin{align*}
    2\mathbb{E}f(\*x_{t+1}) \leq 2\mathbb{E}f(\*x_t) - \alpha_t\mathbb{E}\|\nabla f(\*x_t)\|^2 + \alpha_t\left\|\nabla f(\*x_t) - \nabla f(\*u_t)\right\|^2 + \alpha_t^2L\sigma^2
\end{align*}
Summing from $t=0$ to $T-1$ we obtain
\begin{align*}
    \sum_{t=0}^{T-1}\alpha_t\mathbb{E}\left\|\nabla f(\*x_t)\right\|^2\leq 2(\mathbb{E}f(\*x_0)-\mathbb{E}f(\*x_T)) + \sum_{t=0}^{T-1}\alpha_t^2\sigma^2L + \sum_{t=0}^{T-1}\alpha_t\mathbb{E}\|\nabla f(\*x_t) -  \nabla f(\*u_t)\|^2
\end{align*}
That completes the proof.
\end{proof}

\paragraph{Proof to Theorem 1.}
From Lemma 3 we know
\begin{align*}
    \sum_{t=0}^{T-1}\alpha_t\mathbb{E}\left\|\nabla f(\*x_t)\right\|^2 \leq 2(f(\*0)-f(\*x_T)) + \frac{\sigma^2L}{2}\sum_{t=0}^{T-1}\alpha_t^2+ \sum_{t=0}^{T-1}\alpha_t\mathbb{E}\|\nabla f(\*x_t)-\nabla f(\*u_t)\|^2
\end{align*}
By fitting $\mathcal{L}=1$ and $\*\delta_t^{(x)}=\nabla f(\*x_t)$ in Lemma 2, we obtain
\begin{align*}
    \sum_{t=0}^{T-1}\alpha_t\left(1-16(1+\xi)^2\xi^2\tmix^2{}L^2\right)\mathbb{E}\left\|\nabla f(\*x_t)\right\|^2 \leq 2(f(\*0)-f(\*x_T)) + \frac{\sigma^2L}{2}\sum_{t=0}^{T-1}\alpha_t^2 + 4(1+\xi)^2\xi^2\tmix{}\sigma^2L^2\sum_{t=0}^{T}\alpha_t^3
\end{align*}

\paragraph{Proof to Lemma 4}
\begin{proof}
Let $\*x_{t,j}$ denote the $j$-th coordinate of $\*x_t$. First, consider the following function
\begin{align*}
\*\delta_{t,j} = \frac{\*m_{t,j}}{\*v_{t,j}^p} = \frac{\beta_1\*m_{t-1,j} + (1-\beta_1)\*g_{t,j}}{(\beta_2\*v_{t-1,j}+(1-\beta_2)\*g_{t,j}^2)^p} = \frac{(1-\beta_1)\sum_{k=0}^{t}\beta_1^{t-k}\*g_{k,j}}{\left((1-\beta_2)\sum_{k=0}^{t}\beta_2^{t-k}\*g_{k,j}^2\right)^p}
\end{align*}
By taking the coordinate-wise derivative, we obtain
\begin{align*}
    \left|\frac{\partial}{\partial \*g_{k,j}}\*\delta_{t,j}\right| =& \left|\frac{\left(\frac{\partial \*m_{t,j}}{\partial \*g_{k,j}}\right)\*v_{t,j}^p - p\*v_{t,j}^{p-1}\left(\frac{\partial \*v_{t,j}}{\partial \*g_{k,j}}\right)\*m_{t,j}}{\*v_{t,j}^{2p}}\right|\\
    = & \left|\frac{\left(\frac{\partial \*m_{t,j}}{\partial \*g_{k,j}}\right)}{\*v_{t,j}^p} - \frac{p\left(\frac{\partial \*v_{t,j}}{\partial \*g_{k,j}}\right)\*m_{t,j}}{\*v_{t,j}^{p+1}}\right|\\
    = & \left|\frac{(1-\beta_1)\beta_1^{t-k}}{\*v_{t,j}^p} - \frac{2p(1-\beta_2)\beta_2^{t-k}\*m_{t,j}}{\*v_{t,j}^{p+1}}\right|\\
    \leq & \left|\frac{(1-\beta_1)\beta_1^{t-k}}{\*v_{t,j}^p}\right| + \left|\frac{2p(1-\beta_2)\beta_2^{t-k}\*m_{t,j}}{\*v_{t,j}^{p+1}}\right|\\
    \leq & \frac{(1-\beta_1)\beta_1^{t-k}}{c^p} + \frac{2p(1-\beta_2)\beta_2^{t-k}G_\infty}{c^{p+1}}
\end{align*}
That being said,
\begin{align*}
    & \|\*\delta_t(\*u_t, \cdots, \*u_0) - \*\delta_t(\*x_t, \cdots, \*x_0)\|\\
    \leq & \sum_{k=0}^{t}\|\*\delta_t(\*x_t, \cdots, \*x_{k+1}, \*u_k, \*u_{k-1}, \cdots, \*u_0) - \*\delta_t(\*x_t, \cdots, \*x_{k+1}, \*x_k, \*u_{k-1}, \cdots, \*u_0)\|\\
    = & \sum_{k=0}^{t}\sqrt{\sum_{j=1}^{d}|\*\delta_t(\*x_t, \cdots, \*x_{k+1}, \*u_k, \*u_{k-1}, \cdots, \*u_0)\*e_j - \*\delta_t(\*x_t, \cdots, \*x_{k+1}, \*x_k, \*u_{k-1}, \cdots, \*u_0)\*e_j|^2}\\
    \leq & \sum_{k=0}^{t}\left(\frac{(1-\beta_1)\beta_1^{t-k}}{c^p} + \frac{2p(1-\beta_2)\beta_2^{t-k}G_\infty}{c^{p+1}}\right)\sqrt{\sum_{j=1}^{d}|\*g_k(\*u_k)\*e_j - \*g_k(\*x_k)\*e_j|^2}\\
    \leq & \sum_{k=0}^{t}\left(\frac{(1-\beta_1)\beta_1^{t-k}}{c^p} + \frac{2p(1-\beta_2)\beta_2^{t-k}G_\infty}{c^{p+1}}\right)\|\*g_k(\*u_k)-\*g_k(\*x_k)\|
\end{align*}
Taking square on both sides of the inequality, we obtain
\begin{align*}
    & \|\*\delta_t(\*u_t, \cdots, \*u_0) - \*\delta_t(\*x_t, \cdots, \*x_0)\|^2\\
    \leq & \left(\sum_{k=0}^{t}\left(\frac{(1-\beta_1)\beta_1^{t-k}}{c^p} + \frac{2p(1-\beta_2)\beta_2^{t-k}G_\infty}{c^{p+1}}\right)\|\*g_k(\*u_k)-\*g_k(\*x_k)\|\right)^2\\
    %1
    \leq & \frac{1}{M^2}\left(\sum_{k=0}^{t}\left(\frac{(1-\beta_1)\beta_1^{t-k}M}{c^p} + \frac{2p(1-\beta_2)\beta_2^{t-k}G_\infty M}{c^{p+1}}\right)\|\*g_k(\*u_k)-\*g_k(\*x_k)\|\right)^2\\
    %2
    \leq & \sum_{k=0}^{t}\left(\frac{(1-\beta_1)\beta_1^{t-k}}{Mc^p} + \frac{2p(1-\beta_2)\beta_2^{t-k}G_\infty}{Mc^{p+1}}\right)\|\*g_k(\*u_k)-\*g_k(\*x_k)\|^2\\
    \leq & \sum_{k=0}^{t}\left(\frac{(1-\beta_1)\beta_1^{t-k}}{Mc^p} + \frac{2p(1-\beta_2)\beta_2^{t-k}G_\infty}{Mc^{p+1}}\right)L^2\|\*u_k-\*x_k\|^2
\end{align*}
where $M=\min\left\{\frac{c^p}{2}, \frac{c^{p+1}}{4pG_\infty}\right\}$.
And
\begin{align*}
    \sum_{k=0}^{\infty}\left(\frac{(1-\beta_1)\beta_1^{t-k}}{Mc^p} + \frac{2p(1-\beta_2)\beta_2^{t-k}G_\infty}{Mc^{p+1}}\right)L^2 = \frac{c+2pG\infty}{Mc^{p+1}}=\frac{c+2pG\infty}{Mc^{2p+1}}\max\left\{2,\frac{4pG_\infty}{c}\right\}
\end{align*}
That completes the proof.
\end{proof}

\paragraph{Proof to Theorem 2.}
\begin{proof}
We start from the definitions on the following term
\begin{align*}
    \*h_t = \*x_t + \lambda(\*x_t-\*x_{t-1})
\end{align*}
where $\lambda=\frac{\beta_1}{1-\beta_1}$. Note that the update rule for SAM is 
\begin{align*}
    \*x_{t+1} = \*x_t - \alpha_t\*{\tilde{\delta}}_t^{(u)}
\end{align*}
As a result,
\begin{align*}
    \*h_{t+1} - \*h_t = & -\alpha_t\*{\tilde{\delta}}_t^{(u)} + \lambda(-\alpha_t\*{\tilde{\delta}}_t^{(u)}) - \lambda(-\alpha_{t-1}\*{\tilde{\delta}}_{t-1}^{(u)})\\
    = & \underbrace{-(1+\lambda)\alpha_t\*\delta_t^{(x)} + \lambda\alpha_{t-1}\*\delta_{t-1}^{(x)}}_{T_s} + \underbrace{(1+\lambda)\alpha_t(\*\delta_t^{(x)}-\*{\tilde{\delta}}_t^{(u)}) + \lambda\alpha_{t-1}(\*{\tilde{\delta}}_{t-1}^{(u)}-\*\delta_{t-1}^{(x)})}_{T_p}
\end{align*}
Using Taylor's theorem we obtain
\begin{equation}\label{equation taylor}
\begin{aligned}
    f(\*h_{t+1}) \leq & f(\*h_t) + \langle \nabla f(\*h_t), \*h_{t+1} - \*h_t\rangle + \frac{L}{2}\|\*h_{t+1} - \*h_t\|^2\\
    \leq & f(\*h_t) + \langle \nabla f(\*h_t), T_s\rangle + L\|T_s\|^2 +\langle \nabla f(\*h_t), T_p\rangle + L\|T_p\|^2
\end{aligned}
\end{equation}
Note that the last two terms in the RHS are from parallelism, and we proceed to provide the bound for the last two terms, let $\gamma>0$ be a constant which will be assigned later:
\begin{align*}
    \nabla f(\*h_t)^\top\mathbb{E}\left[T_p\right] = & (1+\lambda)\alpha_t\nabla f(\*h_t)^\top\mathbb{E}\left[\*\delta_t^{(x)}-\*{\tilde{\delta}}_t^{(u)}\right] + \lambda\alpha_{t-1}\nabla f(\*h_t)^\top\mathbb{E}\left[\*\delta_{t-1}^{(x)}-\*{\tilde{\delta}}_{t-1}^{(u)}\right]\\
    = & (1+\lambda)\alpha_t\nabla f(\*h_t)^\top\mathbb{E}\left[\*\delta_t^{(x)}-\*\delta_t^{(u)}\right] + \lambda\alpha_{t-1}\nabla f(\*h_t)^\top\mathbb{E}\left[\*\delta_{t-1}^{(x)}-\*\delta_{t-1}^{(u)}\right]\\
    = & (1+\lambda)\alpha_t\nabla f(\*x_t)^\top\mathbb{E}\left[\*\delta_t^{(x)}-\*\delta_t^{(u)}\right] + \lambda\alpha_{t-1}\nabla f(\*x_t)^\top\mathbb{E}\left[\*\delta_{t-1}^{(x)}-\*\delta_{t-1}^{(u)}\right]\\
    & +  (1+\lambda)\alpha_t(\nabla f(\*h_t)-\nabla f(\*x_t))^\top\mathbb{E}\left[\*\delta_t^{(x)}-\*\delta_t^{(u)}\right] + \lambda\alpha_{t-1}(\nabla f(\*h_t)-\nabla f(\*x_t))^\top\mathbb{E}\left[\*\delta_{t-1}^{(x)}-\*\delta_{t-1}^{(u)}\right]\\
    \leq  & (1+\lambda)\alpha_t\mathbb{E}\|\nabla f(\*x_t)\|\|\*\delta_t^{(x)}-\*\delta_t^{(u)}\| + \lambda\alpha_{t-1}\mathbb{E}\|\nabla f(\*x_t)\|\|\*\delta_{t-1}^{(x)}-\*\delta_{t-1}^{(u)}\|\\
    & + (1+\lambda)\alpha_t\mathbb{E}\|\nabla f(\*h_t)-\nabla f(\*x_t)\|\|\*\delta_t^{(x)}-\*\delta_t^{(u)}\| + \lambda\alpha_{t-1}\mathbb{E}\|\nabla f(\*h_t)-\nabla f(\*x_t)\|\|\*\delta_{t-1}^{(x)}-\*\delta_{t-1}^{(u)}\|\\
    \leq & \frac{(1+\lambda)\alpha_t\gamma}{2}\mathbb{E}\|\nabla f(\*x_t)\|^2 + \frac{(1+\lambda)\alpha_t}{2\gamma}\mathbb{E}\|\*\delta_t^{(x)}-\*\delta_t^{(u)}\|^2 + \frac{\lambda\alpha_{t-1}\gamma}{2}\mathbb{E}\|\nabla f(\*x_t)\|^2 + \frac{\lambda\alpha_{t-1}}{2\gamma}\mathbb{E}\|\*\delta_{t-1}^{(x)}-\*\delta_{t-1}^{(u)}\|^2\\
    & + \frac{(1+\lambda)\alpha_t}{2}\mathbb{E}\|\nabla f(\*h_t)-\nabla f(\*x_t)\|^2 + \frac{(1+\lambda)\alpha_t}{2}\mathbb{E}\|\*\delta_t^{(x)}-\*\delta_t^{(u)}\|^2 + \frac{\lambda\alpha_{t-1}}{2}\mathbb{E}\|\nabla f(\*h_t)-\nabla f(\*x_t)\|^2\\
    & + \frac{\lambda\alpha_{t-1}}{2}\mathbb{E}\|\*\delta_{t-1}^{(x)}-\*\delta_{t-1}^{(u)}\|^2\\
    = & \frac{(1+2\lambda)\alpha_t\gamma}{2}\mathbb{E}\|\nabla f(\*x_t)\|^2 + \frac{(1+2\lambda)\lambda^2\alpha_tL^2}{2}\mathbb{E}\|\alpha_{t-1}\*{\tilde{\delta}}_{t-1}^{(u)}\|^2 + \frac{(1+\lambda)\alpha_t}{2}(\gamma^{-1}+1)\mathbb{E}\|\*\delta_t^{(x)}-\*\delta_t^{(u)}\|^2\\
    & + \frac{\lambda\alpha_{t-1}}{2}(\gamma^{-1}+1)\mathbb{E}\|\*\delta_{t-1}^{(x)}-\*\delta_{t-1}^{(u)}\|^2\\
    = & \frac{(1+2\lambda)\alpha_t\gamma}{2}\mathbb{E}\|\nabla f(\*x_t)\|^2 + \frac{(1+2\lambda)\lambda^2\alpha_t\alpha_{t-1}^2L^2}{2}\mathbb{E}\|\*\delta_{t-1}^{(u)}\|^2 + \frac{(1+2\lambda)\lambda^2\alpha_t\alpha_{t-1}^2L^2}{2}\sigma^2\\
    & + \frac{(1+\lambda)\alpha_t}{2}(\gamma^{-1}+1)\mathbb{E}\|\*\delta_t^{(x)}-\*\delta_t^{(u)}\|^2 + \frac{\lambda\alpha_{t-1}}{2}(\gamma^{-1}+1)\mathbb{E}\|\*\delta_{t-1}^{(x)}-\*\delta_{t-1}^{(u)}\|^2\\
    \leq & \frac{(1+2\lambda)\alpha_t\gamma}{2}\mathbb{E}\|\nabla f(\*x_t)\|^2 + (1+2\lambda)\lambda^2\alpha_t\alpha_{t-1}^2L^2\mathbb{E}\|\*\delta_{t-1}^{(x)}\|^2 + \frac{(1+2\lambda)\lambda^2\alpha_t\alpha_{t-1}^2L^2}{2}\sigma^2\\
    & + \frac{(1+\lambda)\alpha_t}{2}(\gamma^{-1}+1)\mathbb{E}\|\*\delta_t^{(x)}-\*\delta_t^{(u)}\|^2 + \frac{(1+\lambda)\alpha_{t-1}}{2}(\gamma^{-1}+1)\mathbb{E}\|\*\delta_{t-1}^{(x)}-\*\delta_{t-1}^{(u)}\|^2
\end{align*}
Summing from $t=0$ to $T-1$, we obtain
\begin{align*}
    \sum_{t=0}^{T-1}\nabla f(\*h_t)^\top\mathbb{E}[T_p] \leq & \frac{(1+2\lambda)\gamma}{2}\sum_{t=0}^{T-1}\alpha_t\mathbb{E}\|\nabla f(\*x_t)\|^2 + (1+2\lambda)\lambda^2L^2\sum_{t=0}^{T-1}\alpha_t^3\mathbb{E}\|\*\delta_t^{(x)}\|^2\\
    & + (1+\lambda)(\gamma^{-1}+1)\sum_{t=0}^{T-1}\alpha_t\mathbb{E}\|\*\delta_t^{(x)}-\*\delta_t^{(u)}\|^2 + \frac{(1+2\lambda)\lambda^2L^2}{2}\sum_{t=0}^{T-1}\alpha_t^3\sigma^2
\end{align*}
We proceed to bound the second term, specifically,
\begin{align*}
    L\mathbb{E}\left\|T_p\right\|^2 = & L\mathbb{E}\|(1+\lambda)\alpha_t(\*\delta_t^{(x)}-\*{\tilde{\delta}}_t^{(u)}) + \lambda\alpha_{t-1}(\*{\tilde{\delta}}_{t-1}^{(u)}-\*\delta_{t-1}^{(x)})\|^2\\
    = & L\mathbb{E}\|(1+\lambda)\alpha_t(\*\delta_t^{(x)}-\*\delta_t^{(u)}) + \lambda\alpha_{t-1}(\*\delta_{t-1}^{(u)}-\*\delta_{t-1}^{(x)})\|^2 + (1+\lambda)^2\alpha_t^2\sigma^2L + \lambda^2\alpha_{t-1}^2\sigma^2L\\
    \leq & 2(1+\lambda)^2\alpha_t^2L\mathbb{E}\|\*\delta_t^{(x)}-\*\delta_t^{(u)}\|^2 + 2\lambda\alpha_{t-1}^2L\mathbb{E}\|\*\delta_{t-1}^{(u)}-\*\delta_{t-1}^{(x)}\|^2 + (1+\lambda)^2\alpha_t^2\sigma^2L + \lambda^2\alpha_{t-1}^2\sigma^2L
\end{align*}
Summing from $t=0$ to $T-1$, we obtain
\begin{align*}
    L\sum_{t=0}^{T-1}\mathbb{E}\left\|T_p\right\|^2\leq 2((1+\lambda)^2+\lambda)L\sum_{t=0}^{T-1}\alpha_t^2\mathbb{E}\|\*\delta_t^{(x)}-\*\delta_t^{(u)}\| + ((1+\lambda)^2+\lambda^2)\sigma^2L\sum_{t=0}^{T-1}\alpha_t^2
\end{align*}
Combine them together, we obtain
\begin{align*}
    & \sum_{t=0}^{T-1}\nabla f(\*h_t)^\top\mathbb{E}[T_p] + L\sum_{t=0}^{T-1}\mathbb{E}\left\|T_p\right\|^2\\
    \leq & \frac{(1+2\lambda)\gamma}{2}\sum_{t=0}^{T-1}\alpha_t\mathbb{E}\|\nabla f(\*x_t)\|^2 + (1+2\lambda)\lambda^2L^2\sum_{t=0}^{T-1}\alpha_t^3\mathbb{E}\|\*\delta_t^{(x)}\|^2 + (1+\lambda)(\gamma^{-1}+1)\sum_{t=0}^{T-1}\alpha_t\mathbb{E}\|\*\delta_t^{(x)}-\*\delta_t^{(u)}\|^2\\
        &  + \frac{(1+2\lambda)\lambda^2L^2}{2}\sum_{t=0}^{T-1}\alpha_t^3\sigma^2+ 2((1+\lambda)^2+\lambda)L\sum_{t=0}^{T-1}\alpha_t^2\mathbb{E}\|\*\delta_t^{(x)}-\*\delta_t^{(u)}\| + ((1+\lambda)^2+\lambda^2)\sigma^2L\sum_{t=0}^{T-1}\alpha_t^2\\
    \leq & \frac{(1+2\lambda)\gamma}{2}\sum_{t=0}^{T-1}\alpha_t\mathbb{E}\|\nabla f(\*x_t)\|^2 + (1+2\lambda)\lambda^2L^2\sum_{t=0}^{T-1}\alpha_t^3\mathbb{E}\|\*\delta_t^{(x)}\|^2 + (2+\lambda)(\gamma^{-1}+1)\sum_{t=0}^{T-1}\alpha_t\mathbb{E}\|\*\delta_t^{(x)}-\*\delta_t^{(u)}\|^2\\
        & + 2(1+\lambda)^2\sigma^2L\sum_{t=0}^{T-1}\alpha_t^2
\end{align*}
where in the first inequality we fit the sampling variance in the second term into the last term. Note that from Lemma~\ref{lemma general bound} we have
\begin{align*}
    \sum_{t=0}^{T-1}\alpha_t\mathbb{E}\|\*\delta_t^{(x)}-\*\delta_t^{(u)}\|^2 \leq 16(1+\xi)^2\xi^2\tmix^2{}\mathcal{L}^2\sum_{t=0}^{T-1}\alpha_t^3\mathbb{E}\|\*\delta_t^{(x)}\|^2 + 4(1+\xi)^2\xi^2\tmix{}\sigma^2\mathcal{L}^2\sum_{t=0}^{T}\alpha_t^3
\end{align*}
As a result,
\begin{align*}
    & \sum_{t=0}^{T-1}\nabla f(\*h_t)^\top\mathbb{E}[T_p] + L\sum_{t=0}^{T-1}\mathbb{E}\left\|T_p\right\|^2\\
    \leq & \frac{(1+2\lambda)\gamma}{2}\sum_{t=0}^{T-1}\alpha_t\mathbb{E}\|\nabla f(\*x_t)\|^2 + \left((1+2\lambda)\lambda^2L^2+16(1+\xi)^2\xi^2\tmix^2{}\mathcal{L}^2(2+\lambda)(\gamma^{-1}+1)\right)\sum_{t=0}^{T-1}\alpha_t^3\mathbb{E}\|\*\delta_t^{(x)}\|^2\\
        & + 2(1+\lambda)^2L\sum_{t=0}^{T-1}\alpha_t^2\sigma^2 + 4(1+\xi)^2\xi^2\tmix{}\mathcal{L}^2(2+\lambda)(\gamma^{-1}+1)\sum_{t=0}^{T-1}\alpha_t^3\sigma^2\\
    \leq & \frac{(1+2\lambda)\gamma}{2}\sum_{t=0}^{T-1}\alpha_t\mathbb{E}\|\nabla f(\*x_t)\|^2\\
        & + \left((1+2\lambda)\lambda^2L^2+16(1+\xi)^2\xi^2\tmix^2{}\mathcal{L}^2(2+\lambda)(\gamma^{-1}+1)\right)c^{-2p}\sum_{t=0}^{T-1}\alpha_t^3\mathbb{E}\left\|(1-\beta_1)\sum_{k=0}^{t}\beta_1^{k-t}\nabla f(\*x_k)\right\|^2\\
        & + 2(1+\lambda)^2L\sum_{t=0}^{T-1}\alpha_t^2\sigma^2 + 4(1+\xi)^2\xi^2\tmix{}\mathcal{L}^2(2+\lambda)(\gamma^{-1}+1)\sum_{t=0}^{T-1}\alpha_t^3\sigma^2\\
    \leq & \frac{(1+2\lambda)\gamma}{2}\sum_{t=0}^{T-1}\alpha_t\mathbb{E}\|\nabla f(\*x_t)\|^2 + \left((1+2\lambda)\lambda^2L^2+16(1+\xi)^2\xi^2\tmix^2{}\mathcal{L}^2(2+\lambda)(\gamma^{-1}+1)\right)c^{-2p}\sum_{t=0}^{T-1}\alpha_t^3\mathbb{E}\left\|\nabla f(\*x_t)\right\|^2\\
        & + 2(1+\lambda)^2L\sum_{t=0}^{T-1}\alpha_t^2\sigma^2 + 4(1+\xi)^2\xi^2\tmix{}\mathcal{L}^2(2+\lambda)(\gamma^{-1}+1)\sum_{t=0}^{T-1}\alpha_t^3\sigma^2
\end{align*}
That completes the proof.
\end{proof}
From Equation~\ref{equation taylor} we know
\begin{align*}
    f(\*h_{t+1}) \leq & f(\*h_t) + \langle \nabla f(\*h_t), \*h_{t+1} - \*h_t\rangle + \frac{L}{2}\|\*h_{t+1} - \*h_t\|^2\\
    \leq & f(\*h_t) + \langle \nabla f(\*h_t), T_s\rangle + L\|T_s\|^2 +\underbrace{\langle \nabla f(\*h_t), T_p\rangle + L\|T_p\|^2}_{U_t}
\end{align*}
Combined with the result of the sequential proof from \cite{zhou2018convergence}, we obtain
\begin{align*}
    & \sum_{t=0}^{T-1}\alpha_t\mathbb{E}\|\nabla f(\*x_t)\|^2G_\infty^{-2p}\\
    %1
    \leq & \mathbb{E}f\left(\frac{\*x_0}{1-\beta_1}\right) - \mathbb{E}f^* + 3L\sum_{t=0}^{T-1}\mathbb{E}\|\alpha_t\*V_t^{-p}\*g_t\|^2 + 6L\left(\frac{\beta_1}{1-\beta_1}\right)^2\sum_{t=0}^{T-1}\mathbb{E}\|\alpha_{t-1}\*V_{t-1}^{-p}\*m_{t-1}\|^2 + \sum_{t=0}^{T-1}U_t\\
    %3
    \leq & \mathbb{E}f\left(\frac{\*x_0}{1-\beta_1}\right) - \mathbb{E}f^* + \left(3L + \frac{6L\beta_1^2}{1-\beta_1}\right)\frac{G_\infty^{1+q-4p}}{(1-\beta_2)^{2p}\left(1-\frac{\beta_1}{\beta_2^{2p}}\right)}\mathbb{E}\left[\sum_{t=0}^{T-1}\sum_{j=1}^{d}\alpha_t^2|\*g_{t,j}|^{1-q}\right] + \sum_{t=0}^{T-1}U_t\\
    \leq & \mathbb{E}f\left(\frac{\*x_0}{1-\beta_1}\right) - \mathbb{E}f^* + \left(3L + \frac{6L\beta_1^2}{1-\beta_1}\right)\frac{G_\infty^{2-4p}d}{(1-\beta_2)^{2p}\left(1-\frac{\beta_1}{\beta_2^{2p}}\right)}\sum_{t=0}^{T-1}\alpha_t^2\\
        & + \frac{(1+2\lambda)\gamma}{2}\sum_{t=0}^{T-1}\alpha_t\mathbb{E}\|\nabla f(\*x_t)\|^2 + \left((1+2\lambda)\lambda^2L^2+16(1+\xi)^2\xi^2\tmix^2{}\mathcal{L}^2(2+\lambda)(\gamma^{-1}+1)\right)c^{-2p}\sum_{t=0}^{T-1}\alpha_t^3\mathbb{E}\left\|\nabla f(\*x_t)\right\|^2\\
        & + 2(1+\lambda)^2L\sum_{t=0}^{T-1}\alpha_t^2\sigma^2 + 4(1+\xi)^2\xi^2\tmix{}\mathcal{L}^2(2+\lambda)(\gamma^{-1}+1)\sum_{t=0}^{T-1}\alpha_t^3\sigma^2
\end{align*}
Let $\gamma^{-1}=2(1+2\lambda)G^{2p}$, rearrange the terms, we obtain
\begin{align*}
    & \sum_{t=0}^{T-1}\alpha_t\mathbb{E}\|\nabla f(\*x_t)\|^2\\
    \leq & 2G_\infty^{2p}\left(\mathbb{E}f\left(\frac{\*x_0}{1-\beta_1}\right) - \mathbb{E}f^*\right) + 2\left(3L + \frac{6L\beta_1^2}{1-\beta_1}\right)\frac{G_\infty^{2-2p}d}{(1-\beta_2)^{2p}\left(1-\frac{\beta_1}{\beta_2^{2p}}\right)}\sum_{t=0}^{T-1}\alpha_t^2 + 4(1+\lambda)^2LG_\infty^{2p}\sum_{t=0}^{T-1}\alpha_t^2\sigma^2\\
        & + 8(1+\xi)^2\xi^2\tmix{}\mathcal{L}^2(2+\lambda)(2(1+2\lambda)G_\infty^{2p}+1)G_\infty^{2p}\sum_{t=0}^{T-1}\alpha_t^3\sigma^2\\
    \leq & 2G_\infty^{2p}\left(\mathbb{E}f\left(\frac{\*x_0}{1-\beta_1}\right) - \mathbb{E}f^*\right) + 2\left(3L + \frac{6L\beta_1^2}{1-\beta_1}\right)\frac{G_\infty^{2-2p}d}{(1-\beta_2)^{2p}\left(1-\frac{\beta_1}{\beta_2^{2p}}\right)}\sum_{t=0}^{T-1}\alpha_t^2 + \frac{4LG_\infty^{2p}\sigma^2}{(1-\beta_1)^2}\sum_{t=0}^{T-1}\alpha_t^2\\
        & + 8(1+\xi)^2\xi^2\tmix{}\frac{(c+2pG_\infty)^2L^4G_\infty^{2p}}{c^{4p+2}}\left(\frac{2(1+\beta_1)(2-\beta_1)G_\infty^{2p}}{(1-\beta_1)^2} + \frac{2-\beta_1}{1-\beta_1}\right)\max\left\{4, \frac{16p^2G_\infty^2}{c^2}\right\}\sum_{t=0}^{T-1}\alpha_t^3\sigma^2
\end{align*}
That completes the proof.

And the coefficients in the original rate shown in the paper are:
\begin{align*}
    C_1 = & 2G_\infty^{2p}\left(\mathbb{E}f\left(\frac{\*x_0}{1-\beta_1}\right) - \mathbb{E}f^*\right)\\
    C_2 = & 2\left(3L + \frac{6L\beta_1^2}{1-\beta_1}\right)\frac{G_\infty^{2-2p}d}{(1-\beta_2)^{2p}\left(1-\frac{\beta_1}{\beta_2^{2p}}\right)}\\\
    C_3 = & \frac{4LG_\infty^{2p}}{(1-\beta_1)^2}\\
    C_4 = & 8(1+\xi)^2\xi^2\frac{(c+2pG_\infty)^2L^4G_\infty^{2p}}{c^{4p+2}}\left(\frac{2(1+\beta_1)(2-\beta_1)G_\infty^{2p}}{(1-\beta_1)^2} + \frac{2-\beta_1}{1-\beta_1}\right)\max\left\{4, \frac{16p^2G_\infty^2}{c^2}\right\}
\end{align*}

\begin{lemma}\label{lemmasequence}
Given $0\leq\rho<1$ and $T$, a positive integer. Also given non-negative sequences $\{a_t\}_{t=1}^{\infty}$ and $\{b_t\}_{t=1}^{\infty}$ with $\{a_t\}_{t=1}^{\infty}$ being non-increasing, the following inequalities holds:
\begin{align*}
    \sum_{t=1}^{k}a_t\left(\sum_{s=1}^{t}\rho^{-\left\lfloor\frac{t-s}{T}\right\rfloor}b_s\right) \leq & \frac{T}{1-\rho}\sum_{s=1}^{k}a_sb_s\\
    \sum_{t=1}^{k}a_t\left(\sum_{s=1}^{t}\rho^{-\left\lfloor\frac{t-s}{T}\right\rfloor}b_s\right)^2 \leq & \frac{T^2}{(1-\rho)^2}\sum_{s=1}^{k}a_sb_s^2
\end{align*}
\end{lemma}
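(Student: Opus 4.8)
The plan is to reduce both inequalities to a single elementary computation: the value of the geometric-in-blocks sum $\sum_{j=0}^{\infty}\rho^{\lfloor j/T\rfloor}$, where the weights $\rho^{\lfloor (t-s)/T\rfloor}$ are the geometrically decaying factors that arise from Lemma~\ref{lemmamixingtime} (with $\rho=\tfrac12$ this recovers the $2^{-\lfloor\cdot/\tmix{}\rfloor}$ factors, and $T/(1-\rho)$, $T^2/(1-\rho)^2$ become $2T$ and $4T^2$). First I would establish this block-sum fact by grouping the index $j$ into consecutive blocks of length $T$, on each of which $\lfloor j/T\rfloor$ is constant equal to the block number $m$; the $m$-th block then contributes exactly $T\rho^{m}$, and summing the geometric series $\sum_{m=0}^{\infty}T\rho^{m}$ yields $T/(1-\rho)$. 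Every finite truncation is bounded by the same quantity, which is all the proof will need.

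\textbf{First inequality.} I would exchange the order of summation and rewrite the left side as $\sum_{s=1}^{k} b_s \sum_{t=s}^{k} a_t\,\rho^{\lfloor (t-s)/T\rfloor}$. Since $\{a_t\}$ is non-increasing and $t\geq s$ throughout the inner sum, I can bound $a_t\leq a_s$, pull $a_s$ outside, and apply the block-sum fact to the remaining weights, $\sum_{t=s}^{k}\rho^{\lfloor (t-s)/T\rfloor}\leq \sum_{j=0}^{\infty}\rho^{\lfloor j/T\rfloor}=T/(1-\rho)$. This immediately produces the claimed bound $\frac{T}{1-\rho}\sum_{s=1}^{k} a_s b_s$.

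\textbf{Second inequality.} Here the idea is to split each weight symmetrically as $\rho^{\lfloor (t-s)/T\rfloor}=\rho^{\lfloor (t-s)/T\rfloor/2}\cdot\rho^{\lfloor (t-s)/T\rfloor/2}b_s/b_s$ and apply Cauchy--Schwarz to the inner sum, giving $\left(\sum_{s=1}^{t}\rho^{\lfloor (t-s)/T\rfloor}b_s\right)^2\leq\left(\sum_{s=1}^{t}\rho^{\lfloor (t-s)/T\rfloor}\right)\left(\sum_{s=1}^{t}\rho^{\lfloor (t-s)/T\rfloor}b_s^2\right)$. The first factor is at most $T/(1-\rho)$ by the block-sum fact, so inserting this into the outer sum leaves $\frac{T}{1-\rho}\sum_{t=1}^{k}a_t\sum_{s=1}^{t}\rho^{\lfloor (t-s)/T\rfloor}b_s^2$, which is precisely the left side of the first inequality with $b_s$ replaced by $b_s^2$. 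Applying the already-proved first inequality contributes a second factor of $T/(1-\rho)$, and the product gives $\frac{T^2}{(1-\rho)^2}\sum_{s=1}^{k}a_s b_s^2$, as required.

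The only delicate points I anticipate are handling the floor function cleanly in the block-geometric summation and choosing the symmetric weight split so that Cauchy--Schwarz reproduces exactly the structure of the first inequality; the main obstacle is really just the first of these, since once the block-sum value $T/(1-\rho)$ is pinned down, the linear bound follows from monotonicity of $\{a_t\}$ and the quadratic bound follows from the linear one essentially for free.
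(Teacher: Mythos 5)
Your proposal is correct and follows essentially the same route as the paper: swap the order of summation, use monotonicity of $\{a_t\}$ to replace $a_t$ by $a_s$, and bound the block-geometric weight sum by $T/(1-\rho)$, then reduce the quadratic case to the linear one at the cost of another factor $T/(1-\rho)$. The only cosmetic difference is that you obtain the reduction for the squared sum via Cauchy--Schwarz on the inner sum, whereas the paper expands the square into a double sum and applies $b_sb_r\leq\tfrac{1}{2}(b_s^2+b_r^2)$ with symmetry --- the two estimates are equivalent.
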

\begin{proof}
Firstly,
\begin{align*}
    S_k = \sum_{t=1}^{k}a_t\left(\sum_{s=1}^{t}\rho^{-\left\lfloor\frac{t-s}{T}\right\rfloor}b_s\right) = \sum_{s=1}^{k}\sum_{t=s}^{k}\alpha_t\rho^{-\left\lfloor\frac{t-s}{T}\right\rfloor}b_s \leq \sum_{s=1}^{k}a_sb_s\sum_{t=0}^{T-1}\sum_{m=0}^{\infty}\rho^m \leq \frac{T}{1-\rho}\sum_{s=1}^{k}a_sb_s
\end{align*}
further we have
\begin{align*}
    &\sum_{t=1}^{k}a_t\left(\sum_{s=1}^{t}\rho^{-\left\lfloor\frac{t-s}{T}\right\rfloor}b_s\right)^2 = \sum_{t=1}^{k}a_t\sum_{s=1}^{t}\rho^{-\left\lfloor\frac{t-s}{T}\right\rfloor}b_s\sum_{r=1}^{t}\rho^{-\left\lfloor\frac{t-r}{T}\right\rfloor}b_r = \sum_{t=1}^{k}a_t\sum_{s=1}^{t}\sum_{r=1}^{t}\rho^{-\left\lfloor\frac{t-s}{T}\right\rfloor+\left\lfloor\frac{t-r}{T}\right\rfloor}b_sb_r\\
    \leq & \sum_{t=1}^{k}a_t\sum_{s=1}^{t}\sum_{r=1}^{t}\rho^{-\left\lfloor\frac{t-s}{T}\right\rfloor+\left\lfloor\frac{t-r}{T}\right\rfloor}\frac{b_s^2 + b_r^2}{2} = \sum_{t=1}^{k}a_t\sum_{s=1}^{t}\sum_{r=1}^{t}\rho^{-\left\lfloor\frac{t-s}{T}\right\rfloor+\left\lfloor\frac{t-r}{T}\right\rfloor}b_s^2\\
    \leq & \sum_{t=1}^{k}a_t\sum_{s=1}^{t}b_s^2\rho^{-\left\lfloor\frac{t-s}{T}\right\rfloor}\sum_{r=1}^{t}\rho^{-\left\lfloor\frac{t-r}{T}\right\rfloor}\leq \sum_{t=1}^{k}a_t\sum_{s=1}^{t}b_s^2\rho^{-\left\lfloor\frac{t-s}{T}\right\rfloor}\sum_{r=0}^{T-1}\sum_{m=0}^{\infty}\rho^m\\
    \leq & \frac{T}{1-\rho}\sum_{t=1}^{k}a_t\sum_{s=1}^{t}\rho^{-\left\lfloor\frac{t-s}{T}\right\rfloor}b_s^2 \overset{\text{Using $S_k$}}{\leq} \frac{T^2}{(1-\rho)^2}\sum_{s=1}^{k}a_sb_s^2
\end{align*}
That completes the proof.
\end{proof}

\end{document}